%% file: arxiv_update_150726.tex
\documentclass[11pt, dvipsnames, DIV=12]{scrartcl}

\usepackage[english]{babel}
\usepackage[utf8x]{inputenc}
\usepackage[T1]{fontenc}
\usepackage{natbib}
\usepackage{tikz}
\usetikzlibrary{arrows.meta,calc,positioning,bending,decorations.pathmorphing,decorations.text}

\usepackage{url}
\usepackage[dvipsnames]{xcolor}
\usepackage{siunitx}
\usepackage{dirtytalk}
\usepackage{caption}
\usepackage{subcaption}
\usepackage{booktabs}
\usepackage{multirow}
\usepackage{paralist}
\usepackage{enumitem}
\usepackage{todonotes}

\usepackage{amsmath}
\usepackage[all,warning]{onlyamsmath}
\usepackage{amssymb}
\usepackage{amsthm}
\usepackage{amsfonts}
\usepackage{thmtools}
\usepackage[euro]{isonums}
\usepackage[mathic=true]{mathtools}
\usepackage{fixmath}
\usepackage{thm-restate}
\usepackage{mleftright}

\usepackage{bm}

\let\originalleft\mleft
\let\originalright\mright
\renewcommand{\mleft}{\mathopen{}\mathclose\bgroup\originalleft}
\renewcommand{\mright}{\aftergroup\egroup\originalright}

\usepackage{algorithm}
\usepackage{algorithmicx}
\usepackage[]{algpseudocodex}

\makeatletter
\def\thmt@refnamewithcomma #1#2#3,#4,#5\@nil{%
\@xa\def\csname\thmt@envname #1utorefname\endcsname{#3}%
\ifcsname #2refname\endcsname
\csname #2refname\expandafter\endcsname\expandafter{\thmt@envname}{#3}{#4}%
\fi
}
\makeatother

\usepackage[pagebackref,
pdfa,
hidelinks,
pdftex,
pdfdisplaydoctitle,
pdfpagelabels,
pdfauthor={Christopher Morris},
pdftitle={},
pdfsubject={},
pdfkeywords={MPNNs, neural algorithmic reasoning, covering ILPs, approximation},
pdfproducer={Latex with the hyperref package},
pdfcreator={pdflatex}
]{hyperref}

\usepackage[capitalise,noabbrev]{cleveref}

\newtheorem{theorem}{Theorem}

\newtheorem{proposition}[theorem]{Proposition}

\newtheorem{lemma}[theorem]{Lemma}

\theoremstyle{definition}
\newtheorem{definition}[theorem]{Definition}
\theoremstyle{remark}
\newtheorem{remark}[theorem]{Remark}
\newtheorem{example}[theorem]{Example}

\usepackage{microtype}
\usepackage{ellipsis}
\usepackage{anyfontsize}
\usepackage{lmodern}

\usepackage[auth-lg]{authblk}

\input{macros}

\title{\LARGE
Graph Learning Should Move Beyond Restrictive Views\\[0.3em]
of Spectral and Message-Passing GNNs}

\author[1]{Antonis Vasileiou}
\author[2]{Juan Cervino}
\author[3]{Pascal Frossard}
\author[4]{Charilaos I. Kanatsoulis}
\author[1]{Christopher Morris}
\author[1]{Michael T. Schaub}
\author[3]{Pierre Vandergheynst}
\author[5]{Zhiyang Wang}
\author[6]{Guy Wolf}
\author[7]{Ron Levie}

\affil[1]{RWTH Aachen University, Aachen, Germany}
\affil[2]{Massachusetts Institute of Technology, Cambridge, MA, USA}
\affil[3]{École Polytechnique Fédérale de Lausanne, Lausanne, Switzerland}
\affil[4]{Stanford University, Stanford, CA, USA}
\affil[5]{University of California San Diego, La Jolla, CA, USA}
\affil[6]{Univ.\ de Montréal; Mila, Montreal, QC, Canada}
\affil[7]{Technion -- Israel Institute of Technology, Haifa, Israel}

\date{}

\recalctypearea

\begin{document}

\maketitle

\begin{abstract}
Graph neural networks (GNNs) are commonly divided into message-passing neural networks (MPNNs) and spectral GNNs, reflecting two largely separate research traditions in machine learning and signal processing. While MPNNs have a precise definition, there is no widely accepted criterion for what makes a mapping a spectral GNN. Most existing work restricts spectral GNNs to layered architectures based on linear spectral filters. Under this restriction, we show that spectral and spatial GNNs have largely equivalent expressive power. To promote progress in the field, we propose a precise definition of spectral GNNs based on eigenbasis symmetries, in contrast to the definition of MPNNs via neighborhood permutation symmetries. We further argue that the two perspectives offer complementary strengths. MPNNs provide a natural language for discrete structure and expressivity analysis through tools from logic and graph isomorphism, while the spectral perspective offers principled tools for understanding smoothing, bottlenecks, stability, and community structure. Overall, we argue that progress in graph learning will be accelerated by clarifying the similarities and differences between these perspectives and by moving toward a unified theoretical framework.
\end{abstract}

\section{Introduction}

Graphs provide a natural representation for relational data arising in domains such as atomistic systems~\citep{Zha+2023c}, drug design~\citep{Won+2023}, optimization~\citep{Cap+2021,Sca+2024,Qia+2024}, and weather forecasting~\citep{Kei+2022}. Over the past decade, graph neural networks (GNNs) have emerged as a central paradigm for learning from such data.

GNN research has largely developed along two separate traditions. One tradition, rooted in machine learning, is based on \new{message-passing neural networks} (MPNNs), in which node representations are updated via permutation-equivariant aggregation over local neighborhoods~\citep{Gil+2017,Sca+2009}. A second tradition, originating in signal processing and applied mathematics, studies \new{spectral GNNs}, which treat node features as graph signals and define neural networks through operators acting in the spectral domain of a \new{graph shift operator} such as the adjacency matrix or graph Laplacian~\citep{Bron+2017,Ortega2018,Isufi2024,gama2020graphs}.

However, despite these fundamentally different perspectives, many architectures proposed as spectral GNNs reduce to variants of message passing. In our view, this has led to a fragmented understanding of the relationship between spectral GNNs and MPNNs. In particular, the lack of a precise and widely accepted definition of spectral GNNs has led to repeated rediscovery of similar results on stability, transferability, and generalization across the two literatures.

\emph{In this position paper}, we argue that spectral GNNs and MPNNs are fundamentally different graph learning frameworks arising from different symmetry principles. MPNNs are naturally defined through local permutation symmetries, whereas spectral GNNs are defined through spectral symmetries of graph operators. Despite this distinction, the lack of a precise and widely accepted definition of spectral GNNs has led much of the literature to adopt restrictive formulations, resulting in the rediscovery of the results discussed above. We argue that this obscures genuinely spectral phenomena and restricts the development of spectral architectures beyond message passing. To address this issue, we advocate for a general definition of spectral GNNs based on eigenprojection symmetries, under which specific choices of filters—such as linear ones—recover architectures equivalent to MPNNs (up to parametrization), while more general choices capture genuinely distinct spectral models.

We further argue that the two perspectives offer complementary advantages. MPNNs provide a natural framework for studying discrete structure, graph isomorphism, and logical expressivity through tools such as the Weisfeiler--Leman hierarchy and logical formalisms~\citep{Cai+1992,barcelo2020,Gro+2021,Mar+2019,Mor+2019,Mor+2022,Xu+2018b}, whereas spectral GNNs provide principled tools for analyzing smoothing, bottlenecks, stability, transferability, and community structure through spectral properties of graph operators~\citep{Gam+2019,levie2021transferability,ruiz2023transferability,cai2020note,black2023understanding,jamadandi2024spectral}.

\paragraph{Present work.}
Our goal is to clarify when spectral GNNs and MPNNs are equivalent, when they differ, and how existing definitions of spectral GNNs have shaped current equivalence results. Concretely, we

\begin{enumerate}
\item propose a general definition of spectral GNNs based on eigenbasis symmetries, contrasting the definition of MPNNs through neighborhood permutation symmetries (see~\cref{subsec:P1});

\item show that, under the commonly used restriction to linear spectral filters, spectral GNNs and standard MPNNs are equivalent in graph distinguishability power and, under bounded spectral assumptions, also equivalent in approximation power, while also explaining how analogous equivalences in transferability results naturally arise (see~\cref{sec:similar_expressivity});

\item claim that MPNNs and spectral GNNs are naturally suited to different theoretical questions, with MPNNs emphasizing discrete structure and logical expressivity and spectral GNNs emphasizing smoothing, bottlenecks, stability, and community structure (see~\cref{subsec:P3});

\item advocate for a clearer separation between spectral filtering and spectral positional encodings, viewing spectral positional information as a general architectural design choice that can benefit both frameworks (see~\cref{subsec:P4}), and which can be constructed either explicitly via eigendecomposition or implicitly through standard GNN message-passing mechanisms.

\end{enumerate}

\paragraph{Related work.}
Several efforts have explored the relationship between MPNNs and spectral GNNs. For example, \citet{Wang+2022} showed that in the transductive setting, spectral GNN expressivity is upper bounded by that of MPNNs, without proving equivalence, while \citet{Guo+2025} argued that spectral GNNs may rely far less on spectral information than commonly believed. \citet{perlmutter2023understanding} proposed a geometric scattering framework relating prominent MPNN and spectral GNN architectures. Prior works have also established connections between spectral and spatial GNNs \citep{DBLP:journals/corr/abs-2003-11702, DBLP:conf/iclr/BalcilarRHGAH21, DBLP:journals/csur/ChenCZJFZCWAL24}. However, these works typically rely on restricted notions of spectral GNNs or on generalized message-passing formulations that require access to spectral quantities. In particular, \citep{DBLP:journals/corr/abs-2003-11702, DBLP:conf/iclr/BalcilarRHGAH21} define spatial GNNs via explicit access to spectral information, while \citet{DBLP:journals/csur/ChenCZJFZCWAL24} study nonstandard MPNNs with direct access to global graph information, making spectral methods a trivial subset of the model class. In addition, existing works mainly establish one-directional inclusion results showing that certain restricted spectral architectures can be implemented through message passing. In contrast, we adopt the standard definition of MPNNs~\citep{Gil+2017} and relate them to the standard definition of spectral GNNs based on linear filters. 
We prove a two-way equivalence in graph distinguishability power (\cref{cor:functional_calculus_expressivity}), extending beyond prior one-directional inclusion results, and characterize when this equivalence does and does not extend to approximation power (\cref{prop:fc_approx_by_poly_bounded_spectrum}).

\begin{figure*}
\begin{center}
    \includegraphics[width=0.99\linewidth]{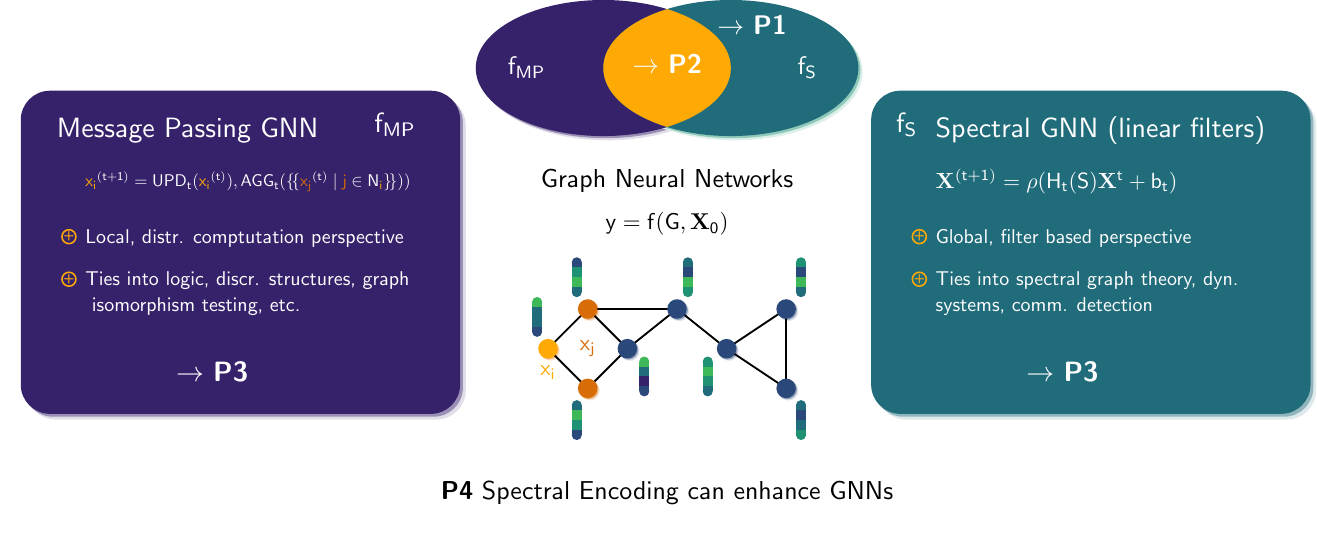}
\end{center}
\caption{Overview of our four positions. \label{fig:overview}}
\end{figure*}

\section{Background}
\label{sec:background}

Here, we introduce the notation and the required basic concepts. Extended background materials are deferred to \cref{app:extensive_background}.

\textbf{Basics}
We denote the set of natural numbers by $\Nb \coloneq \{1,2,\ldots\}$. For $L\in \Nb$, we write $[L]\coloneq \{1,\dots,L\}$. We use $\{\!\!\{\cdot\}\!\!\}$ to denote multisets, i.e., unordered finite collections with possible repetitions. For a matrix $\vec{M}\in\Rb^{n\times m}$, we write $\vec{M}_{i,\cdot}$ (or $\vec{M}[i,\cdot]$) and $\vec{M}_{\cdot,j}$ (or $\vec{M}[\cdot,j]$) for its $i$-th row and $j$-th column, respectively. $\vec{M}_{ij}$ (or $\vec{M}[i,j]$) denotes the $(i,j)$-entry of $\vec{M}$.
We denote by $\text{diag}(\lambda_1,\ldots,\lambda_N)$ the diagonal matrix with diagonal entries $\lambda_1,\ldots,\lambda_N$, and by $\vec{M}^\top$ the transpose of a matrix $\vec{M}$.

\textbf{Graphs}
An undirected graph $G$ is a pair of a finite \new{vertex set} or \new{node set} $V(G)$ and an \new{edge set}
$E(G)\subseteq \{\{u,v\}\subseteq V(G)\mid u\neq v\}$. 
Its \emph{order} is $|V(G)|$. The adjacency matrix $\vec{A}\in\{0,1\}^{N\times N}$ is defined via its entries
$\vec{A}_{uv}=1$ iff $\{u,v\}\in E(G)$ and $\vec{A}_{uv}=0$, otherwise.
The neighborhood of a vertex $u\in V(G)$ is $N_G(u)\coloneqq\{v\in V(G)\mid \{u,v\}\in E(G)\}$, or $N(u)$ when $G$ is clear.
The degree of a node $u$ is denoted $d_u\coloneqq |N_G(u)|$.
Let $\vec{D}\coloneq \text{diag}(d_1,\ldots,d_N)$ be the degree matrix. The \new{combinatorial Laplacian} is defined as $\vec{L}\coloneqq \vec{D}-\vec{A}$, and the \emph{(symmetric) normalized Laplacian} as
$\vec{L}_{\text{sym}}\coloneqq \vec{D}^{-1/2}\vec{L}\vec{D}^{-1/2}$, for an invertible $\vec{D}$.
We consider graphs endowed with node features $\vec{x}_u \in \Rb^{d}$ for $u \in V(G)$ and collect them in the matrix $\vec{X}\in\Rb^{N\times d}$. Two graphs are \new{isomorphic} if there exists a bijection between their vertex sets that preserves adjacency (and node features when present).

A graph neural network (GNN) is any mapping that computes node- or graph-level representations in a vertex permutation-equivariant manner.

\subsection{Message-passing neural networks}
\label{sec:mpnns_main}
\new{Message-passing neural networks} (MPNNs) are GNNs that iteratively update node representations by aggregating information from local neighborhoods. 
Specifically, a generic MPNN updates node features $\vec{x}_u^{(t)} \in \Rb^{d_{t}}$ recursively over a finite number of \new{layer} $L\in \Nb$, as follows
\begin{equation}
\label{eq:mpnn_general}
\!\!\!\!\vec{x}_u^{(t+1)} \!\coloneqq\! \textsf{UPD}_t \mleft( \vec{x}_u^{(t)}, \textsf{AGG}_t \mleft(\{\!\!\{\vec{x}_v^{(t)} \mid v\in N_G(u)\}\!\!\}\mright)
\mright),
\end{equation}
for $u \in V(G)$ and $t\in[L]$. Here $\textsf{AGG}_t$ is a parameterized aggregation map over multisets, $\textsf{UPD}_t$ is a parameterized update map, and $\vec{x}_u^{(0)} \in \Rb^{d_{0}}$ are the initial node features. 
The resulting computations are equivariant to permutations of node orderings and invariant to permutations of neighbors within each neighborhood. For a more extensive motivation and a derivation from symmetry considerations, see \cref{app:mpnns_extensive}. A common and widely used special case of this class is given by sum-aggregation MPNNs (see \cref{eq:deepset_app}).

\subsection{Spectral GNNs Based On Linear Filters}
\label{sec:spectral_gnns_main}

Let $G$ be a graph of order $N$ with node features $\vec{X}$ and let $\vec{S}\in\Rb^{N\times N}$ be a self-adjoint matrix that reflects structural properties of $G$, referred to as a \emph{graph shift operator} (GSO). 
There are several GSO used in the literature; different definitions capture different aspects of the graph structure and lead to distinct modeling properties, as discussed later. 
Common examples include the adjacency matrix $\vec{A}$, the combinatorial Laplacian $\vec{L}$, and the normalized Laplacian $\vec{L}_{\text{sym}}$ (see also \cref{app:extensive_background}). 

Let $\vec{S} = \vec{V}\boldsymbol{\Lambda}\vec{V}^{\top}$ denote the orthonormal eigendecomposition of $\vec{S}$, i.e., $\vec{V} = (\vec{v}_1,\ldots,\vec{v}_N) \in \Rb^{N \times N}$ is an orthogonal matrix whose columns are the eigenvectors of $\vec{S}$, and $\boldsymbol{\Lambda} = \mathrm{diag}(\lambda_1,\ldots,\lambda_N)$ is the diagonal matrix of corresponding eigenvalues. The mapping $\vec{X}\mapsto \vec{V}^{\top}\vec{X}$ is seen as the \emph{graph Fourier transform}, mapping the signal $\vec{X}$ to its frequency (or spectral) representation. The mappings $\vec{Y}\mapsto \vec{V}\vec{Y}$ map the spectral representation $\vec{Y}$ to its spatial representation.

Like MPNNs, spectral GNNs are built as a composition of layers, where each layer consists of a (linear) graph filter followed by a pointwise nonlinearity. A widely used class of such spectral layers is given by \emph{functional-calculus filters}, which apply a learnable function to the spectrum of $\vec{S}$.

Let $\rho$ be a pointwise nonlinearity, and let $H_t \colon \Rb \to \Rb^{d_{t+1} \times d_t}$ be a learnable function at layer $t$. We refer to $H_t$ as a \emph{frequency response}. 
This formulation for frequency response includes architectures based on filter banks (e.g., \citep{Wang+2024a}) and naturally accommodates multidimensional node features.

For a predefined number of layers $L\in\Nb$, a spectral GNN layer takes the form:
\begin{equation}
\label{eq:spec_layer_main}
\vec{X}^{(t+1)}
\coloneqq
\rho\mleft( H_t(\vec{S})\,\vec{X}^{(t)} + \vec{b}_t \mright),
\qquad t \in [L],
\end{equation}
where, $H_t(\vec{S}) \vec{X}^{(t)}
\coloneqq
\sum_{i=1}^{N} \vec{v}_i \vec{v}_i^{\top} \vec{X}^{(t)}\, H_t(\lambda_i)^{\top}$, $\vec{b}_t\in\Rb^{d_{t+1}}$\footnote{Here, addition between $H_t(\vec{S})\,\vec{X}^{(t)}\in\Rb^{N\times d_{t+1}}$ and $\vec{b}_t\in \Rb^{ d_{t+1}}$ is done via broadcasting in the usual way.} is a bias term and $\vec{X}^{(0)}=\vec{X}$.  
Such architecture is called functional calculus spectral GNN, because it defines an operator-valued map $H_t(\vec{S})$ by applying the function to the eigenvalues of $\vec{S}$. Note that the definition of \cref{eq:spec_layer_main} does not depend on the choice of eigenvectors, i.e., the term $\sum_{i:\lambda_i=\lambda}\vec{v}_i\vec{v}_i^{\top}$ is the orthogonal projector onto the eigenspace associated with $\lambda$, which is basis-independent.
Further discussion (including eigenbasis invariance and alternative constructions) is provided in \cref{app:spectral_gnns_extensive}.

A commonly studied subclass of spectral layers is given by \emph{polynomial filters}, where each $H_t$ is a polynomial (see \cref{eq:polynomial_filter}).

\section{Position statements} 
We structure our perspective around the following four key positions.

\subsection{P1: Spectral GNNs and MPNNs Are Derived From Separate Symmetries}
\label{subsec:P1}

Equivariant machine learning is the study of symmetries in data and architectures. Often, one would like to design neural networks that respect data symmetries.  \emph{Equivariant architectures} are models that respect these symmetries by design. This approach typically leads to improved inductive bias and sample complexity, as no data or weights are wasted on learning the symmetries.  While both spectral GNNs and MPNNs are typically described as node permutation invariant or equivariant, this description is not specific enough to uniquely derive either of these classes. We advocate to root MPNNs and spectral GNNs in two different symmetries.

\paragraph{MPNNs From neighborhood permutations} An MPNN can be seen as an architecture that, in each layer, processes each node separately according to its neighborhood. The neighborhood multisets are represented as sequences. Since the same multiset can be ordered in different ways in a sequence, the network that processes the sequences should be \emph{invariant to neighborhood permutations}. Hence, MPNNs are considered architectures that exploit the permutation symmetry of each neighborhood. In fact, one can show that standard message-passing mechanisms in MPNNs are universal approximators for general neighborhood functions. Namely, any continuous function that takes the sequence of node features at a neighborhood, and is invariant to their permutation, can be approximated by one layer of message passing (see Appendix \ref{app:mpnns_extensive} for more details).

\paragraph{Spectral GNNs From eigenbasis invariance} Each layer in a spectral GNN is a computation that maps the signal to the frequency domain, processes its spectral representation, and maps it back to the graph domain. The mapping to the spectral domain is achieved by projecting onto the GSO eigenvectors, and the mapping back is done by taking a linear combination of the eigenvectors, weighted by the spectral coefficients. 
 However, the eigenvectors are not uniquely defined given a GSO. For example, for any eigenvector $\vec{v}_j$, $-\vec{v}_j$ is also an eigenvector. More generally, there is no canonical choice of orthonormal eigenvectors within each eigenspace. Rather, only the eigenspaces are uniquely defined. Hence, a spectral GNN should be invariant to the choice of the eigenbasis. Definition~(\ref{eq:spec_layer_main}) is only a specific example of an eigenbasis invariant GNN, but it is not known to be universal. Instead, we advocate defining general spectral GNNs as follows (written informally).

\begin{definition}
    Each layer in a spectral GNN is a computation that maps the signal to the frequency domain, processes its spectral representation, 
and maps back to the graph domain,  \emph{independently of the specific choice of the eigenbasis (i.e., with eigenbasis invariance)}. 
\end{definition}

We believe that by using this general definition, the community can move beyond the restrictive construction of GNNs based on diagonal linear spectral filters. 

To illustrate that there are eigenbasis-invariant architectures beyond (\ref{eq:spec_layer_main}), we present the following analysis and construction. The standard Fourier inversion formula reads
\begin{equation}
\label{eq:FourierInversion}
   \vec{X} = \sum_{n=1}^N\vec{v}_n\vec{v}_n^{\top}\vec{X}. 
\end{equation}
Here, the $\vec{X}\mapsto \vec{v}_n^{\top}\vec{X}$ is seen as the \emph{analysis} transform, and $\vec{Y}\mapsto \sum_{n=1}^N\vec{v}_n\vec{Y}$ as the \emph{synthesis} transform. A spectral GNN based on this factorization takes the form
\begin{equation*}
\psi(\vec{S},\vec{V},\vec{X})
\coloneqq
\vec{V}\,\hat{\psi} \left(\vec{V}^\top \vec{X}\right),
\end{equation*}
where $\hat{\psi}:\Rb^{N\times d}\to\Rb^{N\times d'}$ is the trainable mapping acting on the spectral coefficients. 
Here, to respect eigenbasis symmetries, standard spectral filters resort to the linear diagonal formula (\ref{eq:spec_layer_main}). Indeed, linearity guarantees that eigenvector sign choices are canceled in analysis--synthesis, and the functional calculus formation assures that spectral components from the same eigenspace are processed identically.

However, one can factor (\ref{eq:FourierInversion}) into analysis and synthesis parts in different ways. For example,  
\begin{equation*}
\vec{X}
=
\sum_{n=1}^N
\Big((\vec{v}_n\vec{v}_n^{\top}\vec{X})./|\vec{v}_n^{\top}\vec{X}|\Big)\,
\Big(|\vec{v}_j^{\top}\vec{X}|\Big)_{j=1}^N.
\end{equation*}
This leads to the spectral GNN layer proposed in \citep{DBLP:conf/nips/LinTL24}
\begin{equation}
\label{eq:NLSF_app}
\psi(\vec{X})
\coloneqq
\sum_{n=1}^N
\Big((\vec{v}_n\vec{v}_n^{\top}\vec{X})./|\vec{v}_n^{\top}\vec{X}|\Big)\,
\psi_n \Big(|\vec{v}_j^{\top}\vec{X}|\Big)_{j=1}^N,
\end{equation}
where $\psi_n:\Rb^{N\times d}\to\Rb^{d\times d'}$ are general trainable mappings. 
This layer automatically respects eigenbasis symmetries, so long as the multiplicity of the eigenvalues is 1.  \citep{DBLP:conf/nips/LinTL24} extended this idea for general multiplicities of the eigenspaces. From another point of view, $(\vec{v}_n\vec{v}_n^{\top}\vec{X})./|\vec{v}_n^{\top}\vec{X}|$ is a consistent choice of the eigenvectors that depends only on the data, and hence (\ref{eq:NLSF_app}) respect the eigenbasis symmetries.  The advantage of such a spectral layer over (\ref{eq:spec_layer_main}) is that it is nonlinear and non-diagonal, i.e., it mixes the information of all frequencies. 

An alternative formulation of an eigenbasis-symmetry-respecting layer is given in \citet{DBLP:conf/nips/GeislerKHG24}. Moreover, eigenbasis symmetries respecting architectures may be achieved via canonicalization \citep{DBLP:conf/icml/KabaMZBR23, DBLP:journals/corr/abs-2509-24886} or potentially be based on frame averaging \citep{DBLP:conf/iclr/PunyASMGBL22}.
See Appendix \ref{app:spectral_gnns_extensive} for more details on the above construction.

\paragraph{Symmetry-preserving universal approximation properties of GNNs}
The MPNN layer formulation in \cref{eq:mpnn_general} is universal in that it can approximate any continuous permutation-invariant function on fixed-dimensional compact domains; see, e.g., \citep{DBLP:conf/nips/ZaheerKRPSS17, Xu+2018b}. In this respect, MPNN layers are sufficiently expressive. In contrast, the spectral GNN layer in \cref{eq:spec_layer_main} does not enjoy a comparable universality guarantee, i.e., it constitutes one particular mechanism for enforcing eigenbasis symmetry, without being known to approximate all continuous functions that respect it. 

    To align with the theoretical guarantees of MPNNs, we argue that the community should seek spectral GNN architectures that simultaneously enforce eigenbasis symmetry and provide universal approximation guarantees. 

 Initial progress in this direction appears in~\citep{DBLP:conf/nips/LinTL24}.

\subsection{P2: MPNNs and Spectral GNNs Based on Linear Filters are Largely Equivalent}
\label{sec:similar_expressivity}
While MPNNs and spectral GNNs originate from different symmetries, spectral GNNS based on linear filters share several theoretical properties with MPNNs.

\paragraph{Polynomial GNNs are special MPNNs}
When spectral frequency responses are restricted to polynomials (\cref{eq:polynomial_filter}) and the GSO is chosen as the adjacency matrix, the resulting spectral layer can be implemented exactly by a linear MPNN with sum aggregation, where each polynomial degree corresponds to one message-passing step. See~\cref{App:P_MPNN} for a formal derivation, including bias terms and extensions to other commonly used GSOs.  
Consequently, analyses of spectral GNNs that focus on polynomial filters often reduce to message-passing arguments and fail to exploit genuinely spectral properties. Such analyses, therefore, do not clarify when spectral methods can outperform MPNNs.

   We advocate studying more general spectral filters, beyond polynomials, using tools from functional analysis, such as \emph{operator Lipschitz continuity}~\citep{aleksandrov2016operator, DBLP:journals/corr/abs-2109-10096}.

\paragraph{The role of the GSO in spectral GNNs with linear filters}
The GSO is central to the expressivity of spectral GNNs. While commonly defined as an $N \times N$ matrix encoding graph structure (e.g., with support on edges and the diagonal), this definition is overly permissive: without further constraints, GSOs can encode arbitrary global graph properties, undermining meaningful analyses of expressivity and complexity.\footnote{For instance, a diagonal GSO encoding isomorphism class enables linear-time graph isomorphism tests.} 
    To avoid such pathologies, we advocate that the community develop and adopt rigorous definitions of GSOs. 

 We believe that promoting rigorous GSO definitions is an important step toward a principled theory of spectral GNNs. The following is an example of a possible definition. 
\begin{restatable}[Aggregation graph shift operator]{definition}{aggregationGSO}
\label{def:aggregationGSO}
Let $G$ be a graph with $N$ nodes.
A matrix $\vec{S} \in \Rb^{N \times N}$ is called an \emph{aggregation graph shift operator} if, for any signal $\vec{f} \in \Rb^N$, the mapping 
$\vec{f}  \longmapsto\; \vec{S}\vec{f}$ can be implemented by a sum-aggregation message passing neural network with finitely many layers applied to the signal $\vec{f}$.
\end{restatable}

Definition \ref{def:aggregationGSO} excludes degenerate constructions while remaining sufficiently general to cover standard choices, e.g., adjacency matrices, Laplacians, and Cayley operators via Jacobi approximations (see \cref{example:aggregation_GSOs}).  This restriction enables principled analysis of expressivity and direct comparison with MPNNs (see Appendices \ref{sec:app_Aggregation_GSO} and \ref{sec:app_expressivity_of_spectral_gnns} for more details).

\paragraph{MPNNs and spectral GNNs have equivalent expressivity in terms of separation power} 
We compare the expressivity of MPNN and spectral GNNs via graph separation, using isomorphism tests that iteratively refine node labels based on local neighborhood structure and induce graph equivalence relations. A GNN class $\mathcal{F}$ is said to match the separation power of a test if any pair of graphs is distinguished by the test if and only if there exists $f \in \mathcal{F}$ producing different outputs on them. Throughout this work, we focus on the \new{$1$-dimensional Weisfeiler--Leman test}~\citep[1-WL;][]{Wei+1968, Wei+1976}, a classical heuristic for the graph isomorphism problem; a formal definition is given in \cref{app:expressivity_as_graph_seperation}. The next result shows that spectral GNNs with continuous filters applied to aggregation GSOs (\cref{def:aggregationGSO}) have separation power exactly matching $1$-WL, which is equivalent to the expressivity of MPNNs of the form \cref{eq:mpnn_general}. This equivalence shows that, once artificially powerful GSOs are excluded, spectral and spatial GNNs have the same expressive power. 

\begin{restatable}[Expressive power of functional-calculus spectral GNNs]{theorem}{expressivePower}
\label{cor:functional_calculus_expressivity}
Let $\mathcal{F}_{\text{C}}(\vec{S})$ denote the class of spectral GNNs defined via functional calculus, i.e., networks whose layers are of the form \cref{eq:SF2}, namely,
\begin{equation*}
\vec{X}^{(t)} = \rho\Big(H_t(\vec{S})\vec{X}^{(t-1)} + \vec{b}^{(t)}\vec{1}^\top\Big),
\end{equation*}
followed by a sum-pooling operator over node features to produce a graph-level representation. Where each frequency response 
$H_t \colon \Rb\to\Rb^{d_t\times d_{t-1}}$ 
is continuous. If $\vec{S}$ is an aggregation graph shift operator, then $\mathcal{F}_{\text{C}}(\vec{S})$ is as expressive as $1$-WL.
\end{restatable}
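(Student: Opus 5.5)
We prove the two inclusions separately: that $1$-WL separates whatever $\mathcal{F}_{\text{C}}(\vec{S})$ separates, and conversely.

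\emph{Upper bound ($\mathcal{F}_{\text{C}}(\vec{S})$ is no stronger than $1$-WL).}
Fix two graphs $G,G'$ of the same order $N$ with the same $1$-WL colour histogram; graphs of different orders are handled separately below. The plan is to show that every network in $\mathcal{F}_{\text{C}}(\vec{S})$ gives them the same sum-pooled output.

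The key step is to reduce each functional-calculus layer to polynomial filters. Let $\sigma(\vec{S})\cup\sigma(\vec{S}')$ be the union of the two finite spectra. By Lagrange interpolation, applied entrywise to the matrix-valued $H_t$, there is a matrix polynomial $P_t$ that agrees with $H_t$ on this set. Hence
\[
H_t(\vec{S})=P_t(\vec{S})\quad\text{and}\quad H_t(\vec{S}')=P_t(\vec{S}').
\]
Continuity of $H_t$ is not even needed at this stage, since only finitely many evaluation points are involved. Now write $P_t(\vec{S})\vec{X}=\sum_k \vec{S}^k\vec{X}\,\vec{C}_k^\top$. By \cref{def:aggregationGSO}, each application of $\vec{S}$ to a signal is computed by a finite sum-aggregation MPNN, applied column by column. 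Therefore each layer $\rho(P_t(\vec{S})\vec{X}+\vec{b})$ is realized by a finite sum-aggregation MPNN: compute the powers $\vec{S}^k\vec{X}$ sequentially, carry all of them along as channels, and combine them linearly in the update together with the bias and $\rho$. Composing the layers and appending sum pooling gives an MPNN of the form \cref{eq:mpnn_general}. Since MPNNs are bounded by $1$-WL (the standard result recalled in \cref{app:expressivity_as_graph_seperation}), the outputs on $G$ and $G'$ coincide.

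One subtlety must be checked here. The MPNN realizing $\vec{S}$ has to be the same architecture on every graph, so that the simulation is uniform in $G$. This is implicit in the definition of an aggregation GSO (a fixed rule, such as the adjacency matrix or a Laplacian), and we state it as an assumption.

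\emph{Lower bound ($\mathcal{F}_{\text{C}}(\vec{S})$ is at least as strong as $1$-WL).}
Given $G,G'$ that $1$-WL distinguishes, we must produce one spectral network that separates them. Following \cite{Xu+2018b, Mor+2019}, it suffices to simulate $1$-WL refinement on the finite set of colours that actually occur in $G\sqcup G'$. Polynomial filters are continuous frequency responses, so we may use $H_t(\lambda)=\vec{W}_0+\vec{W}_1\lambda$, which gives
\[
\vec{X}\mapsto\rho\big(\vec{X}\vec{W}_0^\top+\vec{S}\vec{X}\vec{W}_1^\top+\vec{b}\big).
\]
With $\vec{S}=\vec{A}$ this is exactly the injective sum-aggregation GIN-style update with an MLP, which mimics one $1$-WL iteration on a finite colour set via one-hot encodings. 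For the other aggregation GSOs we recover $\vec{A}\vec{X}$ from polynomials in $\vec{S}$ and the features. For example, with $\vec{L}=\vec{D}-\vec{A}$, the product $\vec{L}\vec{1}$ recovers the degrees, and degrees are $1$-WL-computable, so they can be added as features. Graphs of different orders are separated in one layer by sum-pooling the constant channel.

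\emph{Main obstacle.}
The hardest step is this lower bound for a general aggregation GSO. An arbitrary $\vec{S}$ satisfying \cref{def:aggregationGSO} might be a coarse message-passing map, for example $\vec{S}=\vec{0}$ or $\vec{S}=\vec{I}$, and then the class is strictly weaker than $1$-WL. I would therefore interpret the theorem as asserting equality for standard GSOs such as $\vec{A}$, $\vec{L}$ and $\vec{L}_{\text{sym}}$, each handled as above, and would add as an explicit hypothesis that $\vec{A}$ is recoverable from $\vec{S}$ by message passing. The upper bound holds for every aggregation GSO.
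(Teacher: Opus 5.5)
Your upper bound is correct and takes a cleaner route than the paper. The paper approximates each $H_t$ uniformly by a polynomial on an interval containing the spectra (\cref{lem:poly_approx_spectrum}). It then propagates the error through the layers using the Lipschitz constant of $\rho$ (\cref{lem:filter_to_operator_bound,lem:layer_bound}), applies \cref{lemma:poly_gnn_is_mpnn} and \cref{thm:mpnns_are_1wl} to the approximants, and lets the tolerance go to zero. Your Lagrange interpolation on $\sigma(\vec{S})\cup\sigma(\vec{S}')$ makes the reduction exact: one polynomial network agrees with $\Phi$ on both graphs. So you need no limit, no continuity of $H_t$, and no Lipschitz condition on $\rho$. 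The paper's approximation argument is what it reuses in \cref{prop:fc_approx_by_poly_bounded_spectrum}, where a single polynomial must serve infinitely many spectra and interpolation is unavailable. Your uniformity caveat is also right. The paper relies on it silently when it feeds a single $\mathrm{MPNN}_{\vec{S}}$ into \cref{thm:mpnns_are_1wl} for both graphs.

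On the lower bound, you are right that the statement overclaims ($\vec{S}=\vec{0}$ is an aggregation GSO). The paper's proof indeed only establishes it for $\vec{S}=\vec{A}$, via \cref{prop:expressivityofspectralgnns} and the uniform approximation of GIN-type MPNNs in \cref{thm:spec_approx_mpnn_uniform}. Your exact one-hot simulation is an acceptable substitute, provided you take $\rho$ to be ReLU and stack degree-$0$ layers to realize the MLP, as in \cref{lem:one_layer_MPNN_to_spectral}. A single layer $\rho(\vec{X}\vec{W}_0^\top+\vec{A}\vec{X}\vec{W}_1^\top+\vec{b})$ is not yet the GIN update.

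The genuine error is your treatment of $\vec{L}$. In fact $\vec{L}\vec{1}=\vec{D}\vec{1}-\vec{A}\vec{1}=\vec{0}$, not the degree vector, and the claimed equivalence is false for $\vec{L}$. Since $\vec{1}\in\ker\vec{L}$, a constant input $\vec{X}=\vec{1}\vec{x}^\top$ gives $H_t(\vec{L})\vec{X}=\vec{1}\vec{x}^\top H_t(0)^\top$. By induction, every layer of every network in $\mathcal{F}_{\text{C}}(\vec{L})$ then has identical rows, and the sum-pooled output depends only on $N$. Hence the path and the triangle on three vertices with constant features are not separated, although $1$-WL separates them after one round.

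Your proposed extra hypothesis does not exclude this case. Read with ordinary message passing, $\vec{A}\vec{f}=\vec{D}\vec{f}-\vec{L}\vec{f}$ is computable (\cref{example:aggregation_GSOs}), so the hypothesis holds for $\vec{L}$ while the conclusion fails. A workable condition must constrain what the layers $\rho(H(\vec{S})\vec{X}+\vec{b})$ themselves can compute. Likewise, $\vec{L}_{\text{sym}}$ needs its own argument rather than ``as above''.
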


\paragraph{MPNNs and spectral GNNs have distinct but closely related universal approximation properties}
Beyond graph separation, expressivity can be analyzed via universal approximation on fixed data distributions using the following definition.
\begin{restatable}[Expressivity via universal approximation]{definition}{Expressivityunivapprox}
\label{def:approx}
Let $Q$ be a probability distribution (e.g., over graphs with node features), and let
$F_1$ and $F_2$ be two families of real-valued functions defined on  $Q$.
We say that $F_2$ is \emph{not a universal approximator} of $F_1$ on $Q$ if there exist
$f_1 \in F_1$ and $\varepsilon>0$ such that, for every $f_2 \in F_2$,
\[
\operatorname*{ess\,sup}_{q\sim Q} |f_1(q)-f_2(q)| > \varepsilon.
\]
In that case, we also say that $F_1$ is \emph{not bounded by the approximation capacity of $F_2$ on $Q$}.
\end{restatable}

If $F_1$ is not bounded by the approximation capacity of $F_2$, we say that it is bounded by it.  If both $F_1$ is bounded by the capacity of $F_2$ and vice versa, we say that $F_1$ and $F_2$ have an equivalent approximation capacity.

 If the spectra of the GSOs in the data are uniformly bounded, any spectral GNN can be approximated by a polynomial spectral GNN, because continuous frequency responses on compact intervals admit polynomial approximation. As MPNNs are realizable as polynomial spectral GNNs (\cref{app:approx_mlp_mpnns}), the models have equivalent approximation capacity in this setting (\cref{prop:fc_approx_by_poly_bounded_spectrum}).  
In contrast, for data distributions with unbounded GSO spectra, e.g., complete graphs, whose largest eigenvalue grows with graph size, polynomial spectral GNNs are not universal approximators of spectral GNNs, and consequently neither are MPNNs (see \cref{prop:poly_not_universal_for_fc_adj}). This reveals limitations in expressivity that graph separation alone does not capture and motivates universal approximation as a complementary lens. See \cref{app:expressivity_as_approximation} for more details.

\paragraph{Feature Augmentation Affects Expressivity}
As implied by \cref{eq:mpnn_general,eq:spec_layer_main}, we assume that both MPNNs and spectral GNNs operate directly on graphs with given node features, i.e., each data point is a graph with fixed node features. No feature augmentation is allowed before the network, except for transformations implemented within the layer architecture itself. Under this convention, the expressivity guarantees above apply. Additional feature augmentations, such as structural encodings \citep{DBLP:conf/nips/MaronBSL19} or random features~\citep{Abb+2020, Sat+2020}, can substantially increase expressive power and enable distinguishing otherwise indistinguishable graphs or node configurations; see \cref{subsec:P4}.

\paragraph{Nuances aside, MPNNs and Spectral GNNs with linear filters typically provide similar perspectives} 
Although MPNNs and spectral GNNs are sometimes analyzed through different mathematical tools, many of their theoretical properties are closely aligned in practice. In particular, once differences in architectural conventions are accounted for, the two paradigms often admit nearly equivalent interpretations and guarantees. For stability and size transferability, spectral GNNs are typically analyzed through perturbations of graph filters and graph operators \cite{levie2019transferability,gama2020stability,levie2021transferability,ruiz2021graphon,cervino2022training}, while MPNNs are commonly studied through layer-wise propagation analyses \citep{Chu+2022}. The latter often produce depth-dependent bounds, whereas spectral analyses depend primarily on the degree for polynomial filters, and the Lipschitz constant for continuous filters. Such analyses usually rely on spectral convergence and graph limit arguments\cite{ruiz2021graphon,ruiz2023transferability,wang2023convolutional,wang2024geometric}, whereas MPNN analyses often use operator-theoretic formulations that naturally extend to sparse graph settings \cite{keriven2020convergence,le2023limits,velasco2024graph,Levin2025Transferring,maskey2025generalization}. Despite these technical differences, both frameworks ultimately rely on closely related regularity assumptions and derive qualitatively similar guarantees. However, these distinctions largely disappears when MPNNs defer nonlinearities and apply them only after a block of layers, yielding behavior analogous to polynomial spectral filters. A more detailed discussion appears in \cref{app:p4_extended}.

\label{sec:P4}

\subsection{P3: MPNNs and Spectral GNNs Are Naturally Suited to Different Theoretical Questions}
\label{subsec:P3}

\paragraph{MPNNs have unique benefits for discrete structure and enterprise settings}
\label{sec:P2}
A key advantage of the message-passing perspective is that it aligns GNNs with the rich theory of discrete structures developed in graph theory, logic, and theoretical computer science. For example, MPNNs' expressivity is most naturally characterized via graph isomorphism tests, most notably the $1$-WL test, and logical formalisms such as fragments of first-order logic~\citep{barcelo2020,Grohe23descriptive}, which provide a principled way to reason about what structural properties an MPNN can and cannot capture~\citep{Mor+2019,Xu+2018b}. Such results have led to a clear understanding of MPNNs' limitations and to the principled development of more expressive architectures, overcoming the limitations of MPNNs, e.g., based on the $k$-dimensional Weisfeiler--Leman algorithm~\citep{Cai+1992}, a more expressive generalization of the $1$-WL, subgraph and homomorphism counts~\citep{Bou+2020}, or subgraph-based approaches~\citep{Qia+2022,Fra+2022}. This yields not only stronger empirical models, but also a clear theoretical hierarchy of expressivity, inherited from the corresponding Weisfeiler--Leman and logical hierarchies. That is, in this sense, MPNNs offer both a powerful practical modeling tool and a principled mathematical framework for understanding and extending the limits of graph representation learning. In addition, this viewpoint also directly sheds light on MPNNs' and their more expressive architectures' ability to approximate permutation-invariant and equivariant functions of graphs~\citep{Gee+2020, Azi+2020}.

In addition, MPNNs extend seamlessly to directed graphs~\citep{Ros+2023b} and link prediction tasks via simple, well-understood constructions, e.g., labeling tricks~\citep{Zhu+2021, huang2023a}. In contrast, for example, developing spectral GNNs on directed graphs is much more cumbersome, e.g., leading to the development of the magnetic signed Laplacian~\citep{he2022msgnnspectralgraphneural}. 

Furthermore, the specific local update mechanism of MPNNs has led to close alignment (graph) algorithms~\citep{Cap+2021}. For example, the field of \new{neural algorithmic reasoning}~\citep{Vel+2021} shows that MPNNs are capable of expressing and learning algorithms from data, also leading to results on MPNNs ability to solve linear programs~\citep{chen2023representinglinearprogramsgraph,Qia+2024} and being able to approximate hard maximum constraint satisfaction problems~\citep{yau2024graphneuralnetworksoptimal}.

From a systems and deployment perspective, the explicit aggregation and update structure of MPNNs (see~\cref{eq:mpnn_general}) makes them particularly attractive for large-scale, distributed, and graph learning~\citep[see, e.g.,][]{zhu2019aligraph, zheng2021distdgl, borisyuk2024lignngraphneuralnetworks}. Message passing can be implemented efficiently over partitioned graphs~\citep{Mer+2025} or streamed edges~\citep{Gul+2024}, which is why MPNN-style architectures underlie much of the practical work on GNNs for databases~\citep{relbench} or recommender systems~\citep{Yin+2018a}.

\paragraph{Spectral GNNs have unique benefits for smoothing, bottlenecks, and communities}
To understand how information is smoothed, how it propagates through bottlenecks, and how it aligns with community structure in a graph, spectral GNNs provide a natural abstraction. 
They enable (i) clean characterizations of the oversmoothing phenomenon as spectral contraction, where repeated application of non-expansive filters gradually attenuates high-frequency components and drives representations toward low-dimensional eigenspaces \citep{li2019deepgcns, cai2020note, rusch2023survey}; (ii) intuitive explanations and remedies for oversquashing in terms of spectral quantities such as algebraic connectivity, spectral gaps, curvature and effective resistance, which capture structural bottlenecks and govern long-range information flow \citep{alon2021bottleneck, topping2021understanding, di2023over, black2023understanding, jamadandi2024spectral}; and (iii) community-sensitive representations via low-frequency eigenspaces that encode cluster structure and align with low Dirichlet energy signals \citep{von2007tutorial}.

Beyond interpretability, the spectral perspective makes these phenomena directly analyzable through the frequency response $h(\lambda)$ and the spectrum of the graph shift operator, providing a clearer view of depth–smoothing trade-offs. It also offers practical, theoretically grounded design knobs—such as shaping the passband of spectral filters (e.g., heat kernels, resolvents, or Chebyshev approximations) or modifying graph connectivity to improve spectral properties—that can be used to improve empirical performance \citep{rusch2023survey, epping2024graph, scholkemperresidual}. 

See Appendix~\ref{Ap:Survey of spectral GNNs for smoothing, bottlenecks, and communities} for an extended survey on these topics.

\subsection{P4: Spectral Positional Encodings (SPEs) Are Distinct Tools That Expand Expressivity}
\label{subsec:P4}

Spectral positional encodings (SPEs) play a central role in modern graph Transformers \citep{dwivedi2020generalization, rampavsek2022recipe} and in extending the expressive power of GNNs. They are often implicitly associated with spectral GNNs, since many widely used constructions rely on eigenvectors of GSOs such as the graph Laplacian or random-walk matrix. Concretely, let $\vec{S} =\vec{V}\vec{\Lambda}\vec{V}^\top$ as in \cref{sec:background}. Then the $i-$th row of $\vec{V}$ or $\vec{V}\vec{\Lambda}$ can be used as the PE for the $i$-th node of the graph,
$
\text{SPE}_i = \vec{V}[i,:] ~\text{or SPE}_i = \vec{V}[i,:]\vec{\Lambda}
$.
We argue that this association is conceptually misleading. SPEs constitute an input-representation choice that is orthogonal to whether the backbone architecture is formulated as a message-passing or a spectral operator. Consequently, a substantial fraction of the empirical gains attributed to certain GNN designs is driven by the choice of positional encodings  (PEs) rather than by the parametrization of the graph convolution itself.

Without PEs or other feature augmentations, both MPNNs and spectral GNNs with linear filters are subject to comparable expressivity limitations (\textbf{P1}). In contrast, PEs can inject global structural information or node identifiers that fundamentally change what a model can distinguish. This makes them a dominant factor in performance and generalization, and implies that comparisons between different GNN architectures can be easily confounded unless the PE mechanism is explicitly controlled. From a unifying perspective, PEs should therefore be treated as a separate and explicit design axis, alongside the choice of GNN backbone.

\paragraph{Limitations of SPEs} A central difficulty is that many commonly used SPEs fail to meet basic symmetry and stability requirements. Eigenvectors are defined only up to sign and, in the presence of repeated eigenvalues, up to arbitrary orthonormal transformations within eigenspaces. As a result, standard eigenvector-based PEs are not permutation equivariant and can change drastically under small graph perturbations, potentially harming generalization \citep{davis1969some}. Recent work shows that these limitations can be addressed by enforcing invariance to sign flips and basis rotations, yielding permutation-equivariant and more stable SPEs \citep{limsign,huangstability}.

\paragraph{Generating SPEs with GNNs.} 
  We claim that SPEs need not be introduced solely through explicit eigendecompositions, but can be implicitly generated by standard GNN architectures themselves. In particular, message-passing updates of the form $\vec{X}^{(l)} =  \rho\left(\sum_{k=0}^{K-1}\bm S^k \vec{X}^{(l-1)}\vec{ H}_k^{(l)}\right)$ can be cast as a nonlinear function of eigenvectors $\bm x_v^{(l)} =\rho\left(\bm W^T\bm V[v, :]\right)$ \citep{kanatsoulislearning}, showing that widely used architectures such as GCNs, GINs, and GraphSAGE implicitly construct spectral positional information (for more details see \cref{app:SPE}). Recent works \citep{huangstability, kanatsoulislearning} further demonstrate that spectral graph filters and MPNNs can be used to construct expressive, stable, and permutation-equivariant PEs. Together, these observations suggest a conceptual shift: rather than treating SPEs as a separate preprocessing step, GNNs can be viewed as learning data-dependent spectral representations whose form, expressiveness, and generalization behavior depend jointly on the initial node features \citep{Lou2019,Abb+2020,Sat+2020,kanatsoulis2024graph,kanatsoulis2024counting} and the network parameters. We advocate a modular perspective in which one component of the model constructs equivariant and stable positional information, while another, whether message-passing or spectral, uses this information for downstream prediction. 

\section{Collaboration roadmap and vision}
\emph{Moving forward}, we advocate for a more principled view of graph learning that clearly distinguishes between spectral and spatial (MPNN) GNNs as frameworks arising from different symmetry principles—namely, eigenbasis symmetries for spectral GNNs and neighborhood permutation symmetries for MPNNs. Rather than treating these models as interchangeable, we emphasize understanding when they coincide and when they fundamentally differ. Concretely, we argue for: (i) adopting precise and rigorous definitions of key concepts such as GSOs and spectral GNNs, avoiding implicit assumptions and enabling meaningful comparison across formalisms; (ii) studying spectral GNNs beyond polynomial filters, using tools from functional analysis (e.g., operator Lipschitz continuity) to better capture their behavior; (iii) designing spectral architectures that enforce eigenbasis symmetry while providing desirable properties such as stability and approximation guarantees; (iv) building benchmarks with controlled GSOs that specifically probe oversmoothing, oversquashing, community detection, or directionality/link prediction, enabling us to empirically identify when each viewpoint offers genuine advantages; and (v) clarifying the relationship between spectral filtering and positional encodings, viewing the latter as a general architectural component that can benefit both paradigms.

Overall, our goal is not to unify the two paradigms into a single model class, but to develop a coherent framework that explains their overlap, highlights their differences, and supports principled analysis and design. A detailed discussion of alternative perspectives is provided in \cref{sec:app_alt_views}.

\bibliography{bibliography}

\appendix

\section{Additional related work} 
MPNNs~\citep{Gil+2017,Sca+2009} emerged as one of the most prominent graph machine learning architectures.
Notable instances of this architecture include, e.g.,~\citet{Duv+2015,Ham+2017,Kip+2017} and~\citet{Vel+2018}, which can be subsumed under the message-passing framework introduced in~\citet{Gil+2017}. 
In parallel, many spectral GNNs were introduced in, e.g.,~\citet{Bru+2014,Defferrard2016,Gam+2019,Kip+2017,Lev+2019}, and~\citet{Mon+2017}---all of which descend from early work in~\citet{bas+1997,Gol+1996,Kir+1995,Mer+2005,mic+2005,mic+2009,Sca+2009}, and~\citet{Spe+1997}.

\section{Alternative views}
\label{sec:app_alt_views}
A direct alternative to our position is to posit that MPNN and spectral GNN designs should not be unified. A naïve argument for this view is that, with enough effort, one approach might emerge as superior. However, as discussed in \textbf{P2} and \textbf{P3}, each paradigm has distinct benefits and settings where it provides a more natural starting point.

Another perspective is that competition between the two lines of work fosters productive discussion, even in adversarial form, and thus accelerates progress. However, as discussed in \textbf{P4}, the (somewhat artificial) divide between the two communities often leads to duplication of results, derived with different techniques but ultimately supporting similar conclusions about GNN capabilities. For instance, spectral GNN research frequently focuses on polynomial filters that mimic MPNN behavior (cf. \textbf{P1}), rather than exploring designs that go beyond MPNN constraints.

One might further argue that these approaches originate from fundamentally different backgrounds and rely on incompatible techniques. Yet, the expressivity analysis in \textbf{P1} shows that the function spaces they parameterize have significant overlap, suggesting that a unified perspective is both meaningful and worthwhile.

Beyond feasibility, one may question whether unification should be a primary research focus. While we believe the motivation is compelling—particularly for establishing coherent, parametrization-independent guidelines—we also acknowledge complementary research directions. For example, positional encodings, including spectral ones, have gained attention as a means of enhancing GNN performance independently of architectural choices. As discussed in Sec.~\ref{subsec:P4}, these developments are largely orthogonal to our position and do not diminish the importance of understanding the relationships between GNN paradigms. In fact, trained GNNs can inform data-driven positional encodings~\citep[e.g.,][]{huangstability, kanatsoulislearning, canturk24:PSE}, reinforcing the value of pursuing both directions.

Similarly, one could argue that geometric deep learning should focus on learning transformations that encode graph structure into features, regardless of whether they arise from message passing, spectral filtering, or positional encodings. From this viewpoint, architectures such as graph transformers may eventually subsume hand-designed approaches. However, these models remain challenging to train, often still rely on particular GNN parameterizations, and would benefit from a clearer understanding of information propagation mechanisms on graphs. More broadly, learning geometric representations remains a difficult problem, and a unified framework can provide valuable theoretical guidance.

Finally, one could speculate that advances in large language models may reduce the need for specialized graph learning methods. While such models show impressive capabilities, they do not yet reliably replace tailored graph processing techniques. As such, they do not diminish the relevance of graph machine learning or the importance of developing a principled understanding of GNN architectures.

Overall, we maintain that graph representation learning would benefit from prioritizing a unified formalism that distills the overlap between MPNNs and spectral GNNs—particularly in practical regimes—while preserving their complementary strengths.

\section{Extensive background}
\label{app:extensive_background}

This appendix collects additional notation and mathematical background that is useful for completeness and the analysis in \cref{app:expressivityofmpnns}.

\subsection{Extensive notation}
\label{sec:extensive_notation}

\paragraph{Norms.}
For a vector $\vec{x}\in\Rb^n$, the Euclidean norm and the infinity norm are defined as $\|\vec{x}\|_2 \coloneq \left( \sum_{i=1}^n x_i^2 \right)^{1/2}$ and $\|\vec{x}\|_\infty \coloneq \max_{i\in[n]} |x_i|$. For a matrix $\vec{M}\in\Rb^{n\times m}$, the Frobenius norm is $\|\vec{M}\|_F \coloneq \left( \sum_{i=1}^n \sum_{j=1}^m M_{ij}^2 \right)^{1/2}$. We also use the induced matrix norms $\|\vec{M}\|_2 \coloneq \sup_{\vec{x}\neq \vec{0}} \frac{\|\vec{M}\vec{x}\|_2}{\|\vec{x}\|_2}$ and $\|\vec{M}\|_\infty \coloneq \max_{i\in[n]} \sum_{j=1}^m |M_{ij}|$.

\paragraph{Metric spaces and continuity.}
A \emph{metric space} is a pair $(\cX,d)$ where $\cX$ is a set and $d:\cX\times\cX\to\Rb^+$ satisfies for all $x,y,z\in\cX$: 
(i) $d(x,y)=0$ iff $x=y$, 
(ii) $d(x,y)=d(y,x)$, 
(iii) $d(x,z)\le d(x,y)+d(y,z)$.
Let $(X,d_X)$ and $(Y,d_Y)$ be two metric spaces.
A function $f\colon (X,d_X)\to (Y,d_Y)$ is \emph{$L$-Lipschitz} if
$d_Y(f(x),f(x'))\le L\, d_X(x,x')$ for all $x,x'\in X$. A function $f\colon (X,d_X)\to (Y,d_Y)$ is \emph{uniformly continuous} if for every $\varepsilon>0$ there exists $\delta>0$ such that
$d_Y(f(x),f(x'))<\varepsilon$ whenever $d_X(x,x')<\delta$, for all $x,x'\in X$. Let $(X,d_X)$ and $(Y,d_Y)$ be metric spaces, and let $\mathcal F$ be a family of functions $f\colon (X,d_X)\to (Y,d_Y)$. The family $\mathcal F$ is said to be \emph{equicontinuous} if for every $\varepsilon>0$
there exists $\delta>0$ such that
\[
d_Y\bigl(f(x),f(x')\bigr)<\varepsilon
\quad\text{whenever}\quad
d_X(x,x')<\delta,
\]
for all $x,x'\in X$ and all $f\in\mathcal F$.

\begin{remark}
\label{remark:unif_cont_char}
Uniform continuity can equivalently be characterized by the existence of a non-decreasing function $\omega\colon \Rb^+\to\Rb^+$ with $\omega(0)=0$ and $\lim_{t\to 0}\omega(t)=0$ such that $d_Y(f(x),f(x'))\le \omega \big(d_X(x,x')\big)$ for all $x,x'\in X$. In particular, $L$-Lipschitz functions are uniformly continuous with $\omega(t)=Lt$.
\end{remark}

\begin{remark}
\label{remark:equicontinuity_char}
Equivalently, $\mathcal F$ is equicontinuous if there exists a non-decreasing function
$\omega\colon\Rb^+\to\Rb^+$ with $\omega(0)=0$ and $\lim_{t\to 0}\omega(t)=0$ such that
\[
d_Y\bigl(f(x),f(x')\bigr)\le \omega\bigl(d_X(x,x')\bigr)
\quad\text{for all } x,x'\in X \text{ and all } f\in\mathcal F.
\]
\end{remark}

\paragraph{MLPs}
A multilayer perceptron (MLP) is a feedforward neural network obtained by composing affine maps with pointwise nonlinearities. Given an input $\vec{x}\in\Rb^{d_0}$, an $L$-layer MLP computes
$\vec{x}^{(t+1)}=\rho\left(\vec{W}_t \vec{x}^{(t)}+\vec{b}_t\right)$ for $t\in[L]$,
where $\vec{W}_t$ and $\vec{b}_t$ are weight matrices and bias vectors, and $\rho$ is a non linear function applied componentwise (e.g., ReLu, LeakyReLu, softmax).

\paragraph{Graph shift operators}
Given a graph $G$, a \emph{graph shift operator (GSO)} is any matrix $\vec{S}\in\Rb^{N\times N}$ that in some sense respects the connectivity of the graph. Note that this is not a rigorous definition. 
Some examples of self-adjoint GSOs are the \emph{combinatorial Laplacian} $\vec{L}_{\mathrm{ comb}}=\vec{D}-\vec{A}$, the \emph{symmetric normalzied Laplacian} $\vec{L}_{\mathrm{ sym}}=\vec{D}^{-1/2}\vec{L}_{\mathrm{comb}}\vec{D}^{-1/2}$, and the adjacency matrix itself $\vec{A}$. One can also define non-self-adjoint GSOs, like the \emph{random walk Laplacian} $\vec{L}_{\mathrm{rw}}=\vec{D}^{-1}\vec{L}_{\mathrm{comb}}$ or the unitary \emph{Cayley shift operator} $\vec{C}=(\vec{L}-ic \vec{I})(\vec{L}+ic \vec{I})^{-1}$, where $\vec{L}$ is any self-adjoint GSO and $c>0$.

\paragraph{Graph Fourier transform.}
Any self-adjoint GSO $\vec{S}$ has an orthonormal eigendecomposition
$\vec{S} = \vec{V}\boldsymbol{\Lambda}\vec{V}^{\top}$, where the columns $\vec{v}_n$ of $\vec{V}$ are the eigenvectors, interpreted as the Fourier modes, and $\boldsymbol{\Lambda}$ is a diagonal matrix of the real eigenvalues $\lambda_n$, interpreted as the frequencies, or rates of oscillation of the Fourier modes. The \emph{Graph Fourier Transform (GFT)} maps spatial signals $\vec{X}\in\Rb^{N \times d}$ to their frequency coefficients
\begin{equation*}
\mathcal{F}_{\vec{S}}(\vec{X})  \coloneqq \vec{V}^{\top}\vec{X}.
\end{equation*}
The inverse GFT is given by $\vec{V}\vec{Y}$ formula reads $\vec{X}=\vec{V}\vec{V}^{\top}\vec{X}$. 

\section{Derivation of MPNNs and spectral GNNs from 
equivariant machine learning}
\label{app:extensive_discussion_mpnn_spectral}
This appendix provides a more detailed discussion of message passing neural networks (MPNNs) and spectral graph neural networks (GNNs), with a particular focus on their underlying symmetry principles and invariance requirements. We present two distinct conceptual motivations for these two classes of architectures: MPNNs arise naturally from permutation-equivariant processing of neighborhoods, while spectral GNNs are motivated by invariance with respect to the choice of eigenbases of graph operators. In addition, we discuss alternative viewpoints, modeling choices, and definitions that appear in the literature, highlighting both common formulations and less standard extensions. This extended discussion complements the concise definitions given in the main text and clarifies the scope of architectures considered in this work.

\subsection{Definition of MPNNs}
\label{app:mpnns_extensive}
We present one way to conceptually derive MPNNs from symmetry-preserving design principles.
Given a graph $G$ with node features $\vec{X}\in\Rb^{N\times d}$ (assuming $V(G)=[N]$), consider the collection of neighborhoods $\{N_i\}_{i\in[N]}$.
For each neighborhood $N_i$, define the multiset of neighbor features
\begin{equation*}
\vec{Y}_i \coloneq \{\{\vec{X}_j \mid j\in N_i\}\}.
\end{equation*}
The collection $\{(\vec{X}_i,\vec{Y}_i)\}_{i\in[N]}$ can be viewed as a representation of the graph-feature pair $(G,\vec{X})$.
An MPNN layer updates the feature at vertex $i$ as a function of $(\vec{X}_i,\vec{Y}_i)$, shared across vertices.

Two symmetry requirements arise naturally:
(1) the ordering of vertices $[N]$ is arbitrary, hence the overall mapping should be equivariant to permutations of vertices;
(2) within each neighborhood, the ordering of neighbors is arbitrary, hence the neighborhood map should be invariant to permutations of the multiset $\vec{Y}_i$.
Concretely, for any permutation $\sigma\in\mathcal{S}_K$ with $K=|N_i|$, the neighborhood map $\psi$ should satisfy
\[
\psi(\vec{X}_i,\sigma\vec{Y}_i)=\psi(\vec{X}_i,\vec{Y}_i).
\]

A classical construction for permutation-invariant multiset functions is given by the \emph{DeepSets} formulation
\begin{equation}
\label{eq:deepset_app}
\psi(\vec{X}_i, \vec{Y}_i)
=
\phi \left(
\vec{X}_i,
\sum_{\vec{y}\in \vec{Y}_i} \kappa(\vec{y})
\right),
\end{equation}
where $\kappa$ and $\phi$ are suitable neural networks. In fact, it was shown that DeepSets are universal approximators, namely, any continuous permutation invariant function can be approximated by a DeepSet \cite{DBLP:conf/nips/ZaheerKRPSS17}.

This formulation directly leads to the standard class of message passing architectures commonly referred to as sum-aggregation MPNNs, which take the form
\begin{equation}
\label{eq:mpnn_deepsets}
\vec{x}_u^{(t+1)} \coloneqq \phi_t \left( \vec{x}_u^{(t)}, \sum_{v\in N_G(u)} \kappa_t(\vec{x}_v^{(t)}) \right),
\end{equation}
for all $u \in V(G)$ and $t \in [L]$. Here, $\kappa_t$ and $\phi_t$ are learnable, parameterized functions, such as feed-forward neural networks.

Interpreting the sum as an aggregation function suggests a more general update--aggregate template,
\begin{equation}
\label{eq:DS_app}
\psi(\vec{X}_i,\vec{Y}_i)
=
\text{UPD} \left(
\vec{X}_i,
\text{AGG}\bigl(\{\{\vec{y}\mid \vec{y}\in \vec{Y}_i\}\}\bigr)
\right),
\end{equation}
where $\text{AGG}$ may be sum, mean, max, or other permutation-invariant aggregators.

Applying \cref{eq:DS_app} at all nodes yields the standard MPNN update rule: at layer $t$, define $\vec{Y}_i^{(t)}=\{\{\vec{x}_j^{(t)}\mid j\in N_i\}\}$ and set
$\vec{x}_i^{(t+1)}=\psi_t(\vec{x}_i^{(t)},\vec{Y}_i^{(t)})$ for a shared map $\psi_t$.

\subsection{Definition of spectral GNNs}
\label{app:spectral_gnns_extensive}

Let $\vec{S}$ be a self-adjoint graph shift operator associated with $(G,\vec{X})$.
A spectral GNN layer is defined as a function that processes the signal $\vec{X}$ in the spectral domain and maps the result back to the signal domain.
If $\vec{S}=\vec{V}\boldsymbol{\Lambda}\vec{V}^\top$ is an orthonormal eigendecomposition, one may consider the generic form
\begin{equation}
\label{eq:GenSpec_app}
\psi(\vec{S},\vec{V},\vec{X})
\coloneqq
\vec{V}\,\hat{\psi} \left(\vec{V}^\top \vec{X}\right),
\end{equation}
where $\hat{\psi}:\Rb^{N\times d}\to\Rb^{N\times d'}$ acts on the spectral coefficients.

However, \cref{eq:GenSpec_app} is not sufficient to define a graph-consistent operation, because $\vec{S}$ can admit multiple valid eigenbases:
eigenvectors are defined up to sign, and in the presence of repeated eigenvalues, any orthonormal basis of an eigenspace is admissible.
Therefore, for $\psi$ to depend only on $(\vec{S},\vec{X})$ (and not on arbitrary eigendecomposition choices), it should be invariant to the chosen eigenbasis.
Concretely, for any two eigenbases $\vec{V}$ and $\vec{V}'$ of $\vec{S}$, we require
\begin{equation}
\label{eq:EigenbasisInv_app}
\forall\,\vec{X}\in\Rb^{N\times d}:\quad
\psi(\vec{S},\vec{V},\vec{X})
=
\psi(\vec{S},\vec{V}',\vec{X}).
\end{equation}

A classical approach is to restrict $\hat{\psi}$ to be diagonal in the eigenbasis.
In the one-dimensional case ($d=1$), this means
\[
\hat{\psi}(\vec{Y})
\coloneqq
h(\boldsymbol{\Lambda})\vec{Y},
\qquad
h(\boldsymbol{\Lambda})
=
\text{diag} \bigl(h(\lambda_1),\ldots,h(\lambda_N)\bigr),
\]
for a scalar function $h:\Rb\to\Rb$. Then $\psi(\vec{S},\vec{X})=h(\vec{S})\vec{X}=\vec{V}h(\boldsymbol{\Lambda})\vec{V}^\top\vec{X}$.

In the case of higher-dimensional initial node features (often referred to as the multichannel setting in spectral GNNs) this construction naturally extends to
\begin{equation}
\label{eq:SF_app}
\forall \vec{X} \in \Rb^{N\times d}, \quad
\psi(\vec{S},\vec{X})
=
H(\vec{S})\vec{X}
\coloneqq
\sum_{i=1}^N \vec{v}_i \vec{v}_i^{\top}\vec{X}\,H(\lambda_i)^{\top},
\end{equation}
where the frequency response is matrix-valued, $H:\Rb\to\Rb^{d'\times d}$. Such filters are called functional calculus filters since they define an operator-valued function of $\vec{S}$ through its spectrum. Moreover, \cref{eq:SF_app} satisfies the invariance requirement \cref{eq:EigenbasisInv_app}: within each eigenspace, the sum of projectors $\sum_{i:\lambda_i=\lambda}\vec{v}_i\vec{v}_i^\top$ is the orthogonal projector onto that eigenspace and is independent of the particular eigenbasis.

In practice, spectral GNNs are commonly built by stacking such spectral filters with trainable frequency responses, adding a bias, and applying a pointwise nonlinearity, recovering the spectral GNN formulation introduced in \cref{sec:background} via \cref{eq:spec_layer_main}. A frequent choice is to parametrize $H$ as a polynomial, which permits implementing $H(\vec{S})\vec{X}$ directly in the spatial domain as a finite combination of powers of $\vec{S}$. This yields message-passing style implementations as analyzed in \cref{App:P_MPNN}.

\paragraph{Alternative constructions}
There are also alternative approaches to ensure eigenbasis invariance. For example, nonlinear spectral filters (NLSF) \citep{DBLP:conf/nips/LinTL24} are defined by
\begin{equation}
\label{eq:NLSF_app2}
\psi(\vec{X})
\coloneqq
\sum_{n=1}^N
\Big((\vec{v}_n\vec{v}_n^{\top}\vec{X})./|\vec{v}_n^{\top}\vec{X}|\Big)\,
\psi_n \Big(|\vec{v}_j^{\top}\vec{X}|\Big)_{j=1}^N,
\end{equation}
where $\psi_n:\Rb^{N\times d}\to\Rb^{d\times d'}$ is a trainable function, $|\vec{v}_n^{\top}\vec{X}|=(|\vec{v}_n^{\top}\vec{X}_{:,j}|)_{j=1}^d$, and $./$ denotes elementwise division.

Here, $(\vec{v}_n\vec{v}_n^{\top}\vec{X})./|\vec{v}_n^{\top}\vec{X}|$ is a consistent choice of the eigenvectors that depends only on the data, in case all eigenvalues have multiplicity 1. Indeed, $\vec{v}_n\vec{v}_n^{\top}$ is  the eigenspace projection and does not depend on the arbitrary choice of $\vec{v}_n$. Equation (\ref{eq:NLSF_app2}) has an extension for any eigenvalue multiplicities \citep{DBLP:conf/nips/LinTL24}. An additional alternative formulation is given in \citet{DBLP:conf/nips/GeislerKHG24}. Since the diagonal linear filter formulation \cref{eq:SF_app} is the most widespread and commonly taken as the canonical definition of spectral layers, we primarily focus on it in this paper.

\section{Expressivity of MPNNs and spectral GNNs}
\label{app:expressivityofmpnns}

In this section we compare the expressivity of spectral GNNs and message passing neural networks (MPNNs) with sum aggregation. Since message passing architectures with sum aggregation can approximate any continuous permutation-invariant function on neighborhood multisets (of fixed size), this choice entails no loss of generality, and the analysis extends to other aggregation schemes.

We therefore focus on MPNNs of \cref{eq:mpnn_deepsets}, which we repeat here for convenience 
\begin{equation}
\label{eq:GIN}
\vec{x}_i^{(t)} =
\phi^{(t)} \left(
\vec{x}_i^{(t-1)},\;
\sum_{j \in N(i)} \kappa^{(t)}(\vec{x}_j^{(t-1)})
\right),
\end{equation}
where $\vec{x}_i^{(t)} \in \Rb^{d_t}$ denotes the feature vector of node $i$ at layer $t$,
$\kappa^{(t)} \colon \Rb^{d_{t-1}} \to \Rb^{m_t}$ and
$\phi^{t} \colon \Rb^{d_{t-1}} \times \Rb^{m_t} \to \Rb^{d_t}$
are continuous functions, and $N(i)$ denotes the neighborhood of node $i$. Here, $\vec{x}^{(0)}_i \coloneq \vec{x}_i$ are the input features, i.e., $\vec{X}=(\vec{x}^{(0)}_i)_{i=1}^N$, and the MPNN is defined to be $\theta(G,X)=\vec{X}^{(L)}$, where $X^{(L)}=(\vec{x}_i^{(L)})_{i=1}^N$. In graph-level tasks, the output of the MPNN is defined to be some global aggregation of the node features $\vec{X}^{(L)}$, i.e., $\theta(G,X)=\sum_i \vec{x}_i^{(L)}/N$.

We also consider spectral GNNs based on continuous frequency responses (\ref{eq:spec_layer_main}).
For convenience, we repeat the definition: 
\begin{equation}
\label{eq:SF2}
\vec{X}^{(t)} 
= \rho\big(H_t(\vec{S})\vec{X}^{(t-1)} + \vec{b}^{(t)}\vec{1}^\top \big)
\coloneq \rho\Big(
\sum_{i=1}^N \vec{v}_i \vec{v}_i^{\top}\vec{X}^{(t-1)}H_t(\lambda_i)^{\top}
+ \vec{b}^{(t)}\vec{1}^\top
\Big),
\end{equation}
where $H_t:\Rb\rightarrow \Rb^{d_t\times d_{t-1}}$, called the \emph{frequency response}, is continuous, 
$\vec{b}^{(t)} \in \Rb^{d_t}$ is a bias vector,
and $\rho:\Rb\rightarrow\Rb$ is a Lipschitz continuous activation function applied element-wise.

We compare the expressivity of these architectures using two mathematical notions: separation power via graph isomorphism tests, and universal approximation.

\subsection{Polynomial spectral GNNs are MPNNs}
\label{App:P_MPNN}

Consider a spectral GNN based on the adjacency matrix $\vec{S}=\vec{A}$ as the GSO. 
If one restricts the choice of the frequency responses $H=(H_{i,j})$ to be polynomials 
\begin{equation}
\label{eq:polynomial_filter}
H_{i,j}(z)=\sum_{k=0}^K c^{(i,j)}_k z^k,
\end{equation}
then one can implement the linear filter $H(\vec{S})$ of (\ref{eq:SF2}) directly as an MPNN. Namely, for any $i\in[d']$,
\[
\big(H(\vec{S})\vec{X}\big)_i
= \sum_{j=1}^{d} H_{j,i}(\vec{S})\vec{X}_{:,j}
= \sum_{k=0}^K \sum_{j=1}^{d} c^{(j,i)}_k \vec{S}^k \vec{X}_{:,j}.
\]
Hence,
\begin{equation}
H(\vec{S})\vec{X}
= \sum_{k=0}^K \vec{S}^k \vec{X} \vec{C}_k^{\top},
\end{equation}
where $\vec{C}_k = (c^{(i,j)}_k)_{i,j}$.

Including the bias term, a single spectral GNN layer takes the form
\begin{equation}
\label{eq:Pfilt}
\vec{X}^{(t)}
= \rho\Big(
\sum_{k=0}^K \vec{S}^k \vec{X}^{(t-1)} \vec{C}_k^{\top}
+ \vec{b}^{(t)}\vec{1}^\top
\Big).
\end{equation}

The right-hand side of (\ref{eq:Pfilt}) is nothing else but a $K$-layer MPNN since $\vec{S}=\vec{A}$. This construction extends to any \emph{aggregation GSO} $\vec{S}$, as defined in \cref{sec:app_Aggregation_GSO}.

\subsection{Any MPNN can be approximated by polynomial GNN}
\label{app:approx_mlp_mpnns}

We now establish a constructive relation between a single message passing layer whose aggregation and update functions are implemented as ReLU MLPs, and polynomial spectral GNNs with ReLU activation.

We first address a technical issue caused by ReLU nonlinearities, namely preserving access to the initial node features across layers.

\begin{lemma}
\label{lem:carry_initial_node_features}
Let $\rho=$ReLU be applied element-wise and let $\vec{X}\in\Rb^{N\times d}$ be the initial node-feature matrix.
Then there exists a degree--$0$ polynomial spectral GNN layer  that maps $\vec{X}$ to a feature matrix
\[
\vec{Z}\in\Rb^{N\times 2d}
\qquad\text{with}\qquad
\vec{Z}=\big[\rho(\vec{X}),\rho(-\vec{X})\big].
\]
This representation \emph{carries the initial node features} in the following exact sense:
for every matrix $\vec{W}\in\Rb^{d\times r}$, there exists
$\widetilde{\vec{W}}\in\Rb^{2d\times r}$ such that
\begin{equation}
\label{eq:carry_initial_features_linear}
\vec{X}\vec{W} = \vec{Z}\,\widetilde{\vec{W}},
\qquad
\widetilde{\vec{W}}=
\begin{bmatrix}
\vec{W}\\
-\vec{W}
\end{bmatrix}.
\end{equation}
Consequently, although $\vec{X}$ itself is not explicitly present as a channel after ReLU nonlinearities, any subsequent node-wise affine map that depends on the initial node features can be implemented exactly using only $\vec{Z}$.
\end{lemma}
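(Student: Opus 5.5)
The plan is to write down an explicit degree-$0$ polynomial spectral layer and then check the identity \cref{eq:carry_initial_features_linear} entrywise.

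\textbf{Construction.} A degree-$0$ polynomial filter has the form $H(\vec{S})\vec{X}=\vec{S}^0\vec{X}\vec{C}_0^\top=\vec{X}\vec{C}_0^\top$, as in \cref{eq:Pfilt} with $K=0$. It acts node-wise and does not depend on $\vec{S}$ at all. I will choose output dimension $2d$, bias $\vec{b}=\vec{0}$, and
\[
\vec{C}_0^\top=\begin{bmatrix}\vec{I}_d & -\vec{I}_d\end{bmatrix}\in\Rb^{d\times 2d},
\]
so that $\vec{X}\vec{C}_0^\top=[\vec{X},-\vec{X}]$. Since $\rho$ acts element-wise, the layer outputs $\vec{Z}=\rho([\vec{X},-\vec{X}])=[\rho(\vec{X}),\rho(-\vec{X})]$, as required.

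\textbf{Identity.} The key fact is the scalar identity $\rho(x)-\rho(-x)=\max(x,0)-\max(-x,0)=x$ for every $x\in\Rb$. It holds in both cases: for $x\ge 0$ it reads $x-0$, and for $x<0$ it reads $0-(-x)$. Applied entrywise, this gives $\rho(\vec{X})-\rho(-\vec{X})=\vec{X}$. Block multiplication then yields
\[
\vec{Z}\widetilde{\vec{W}}=\rho(\vec{X})\vec{W}-\rho(-\vec{X})\vec{W}=\big(\rho(\vec{X})-\rho(-\vec{X})\big)\vec{W}=\vec{X}\vec{W}.
\]

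\textbf{Consequence.} Any node-wise affine map on the initial features has the form $\vec{X}\vec{W}+\vec{1}\vec{b}^\top$. Its linear part is implemented by $\vec{Z}\widetilde{\vec{W}}$, and the bias is unchanged. So any such map can be rewritten with $\vec{Z}$ in place of $\vec{X}$.

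I do not expect a real obstacle. The only point to be careful about is that a degree-$0$ filter is a legitimate layer of \cref{eq:SF2}. It is: it corresponds to the constant frequency response $H(\lambda)=\vec{C}_0$, which is continuous and yields $H(\vec{S})=\vec{I}\otimes\vec{C}_0$ regardless of eigenbasis.
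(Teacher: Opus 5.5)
Your proposal is correct and matches the paper's proof essentially verbatim: the same degree-$0$ layer with $\vec{C}_0^\top=[\vec{I}_d,-\vec{I}_d]$ and zero bias, then the entrywise identity $\rho(a)-\rho(-a)=a$ and block multiplication with $\widetilde{\vec{W}}$. Your remarks on carrying the bias through the affine map, and on the constant frequency response being a legitimate continuous filter, are harmless additions that the paper leaves implicit.
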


\begin{proof}
Choose a degree--$0$ polynomial spectral layer in \eqref{eq:Pfilt} with output dimension $2d$,
zero bias, and
\[
\vec{C}_0^\top=[\vec{I}_d,-\vec{I}_d],
\]
where $\vec{I}_d$ is the $d$-dimensional identity matrix.
Applying this layer yields
\[
\vec{Z} =\rho\big(\vec{X}[\vec{I}_d,-\vec{I}_d]\big)
= \big[\rho(\vec{X}),\rho(-\vec{X})\big].
\]
Since $\rho(a)-\rho(-a)=a$ for all $a\in\Rb$, we have entrywise
\[
\vec{X}=\rho(\vec{X})-\rho(-\vec{X}).
\]
Multiplying by $\vec{W}$ gives
\[
\vec{X}\vec{W}=\big(\rho(\vec{X})-\rho(-\vec{X})\big)\vec{W}
= \big[\rho(\vec{X}),\rho(-\vec{X})\big]
\begin{bmatrix}
\vec{W}\\
-\vec{W}
\end{bmatrix},
\]
which is exactly \eqref{eq:carry_initial_features_linear}.
\end{proof}

The following lemma shows that polynomial spectral GNNs can carry previous node features to the next iteration in a concatenation form.

\begin{lemma}
\label{lemma:concat_previous}
Let $\rho=$ReLU be applied element-wise. Fix $t\ge 1$ and let $\vec{X}^{(t-1)}\in\Rb^{N\times d_t}$.
For any polynomial filter $\{\vec{C}_k^{(t)}\}_{k=0}^K$ acting as:
$$
\vec{Z}^{(t)} \coloneqq \rho\Big(\sum_{k=0}^K \vec{S}^k \vec{X}^{(t-1)}(\vec{C}_k^{(t)})^\top + \vec{b}^{(t)}\vec{1}^\top\Big)\in\Rb^{N\times m_t},
$$
there exists a polynomial spectral GNN layer whose output can be written as
$$
\vec{X}^{(t)} = \Big[\rho(\vec{X}^{(t-1)}),\,\rho(-\vec{X}^{(t-1)}),\,\vec{Z}^{(t)}\Big]\in\Rb^{N\times (2d_t+m_t)}.
$$
Moreover, this representation carries $\vec{X}^{(t-1)}$ in the sense of \cref{lem:carry_initial_node_features}: for every $\vec{W}\in\Rb^{d_t\times r}$ there exists $\widetilde{\vec{W}}\in\Rb^{(2d_t+m_t)\times r}$ such that
$$
\vec{X}^{(t-1)}\vec{W}=\vec{X}^{(t)}\widetilde{\vec{W}}.
$$
\end{lemma}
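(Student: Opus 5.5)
The plan is to build the required layer directly as a single polynomial spectral GNN layer of the form \eqref{eq:Pfilt}, by stacking output channels block-wise. We take the same degree $K$ as the given filter, with output dimension $2d_t+m_t$, and define new coefficient matrices $\widehat{\vec{C}}_k^{(t)}\in\Rb^{(2d_t+m_t)\times d_t}$ and a bias $\widehat{\vec{b}}^{(t)}\in\Rb^{2d_t+m_t}$ by
\[
(\widehat{\vec{C}}_0^{(t)})^\top=\big[\vec{I}_{d_t},\,-\vec{I}_{d_t},\,(\vec{C}_0^{(t)})^\top\big],\qquad
(\widehat{\vec{C}}_k^{(t)})^\top=\big[\vec{0},\,\vec{0},\,(\vec{C}_k^{(t)})^\top\big]\ (k\ge 1),
\]
and $\widehat{\vec{b}}^{(t)}=(\vec{0},\vec{0},\vec{b}^{(t)})$.

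Since $\sum_k \vec{S}^k\vec{X}^{(t-1)}(\widehat{\vec{C}}_k^{(t)})^\top$ is linear in the coefficient matrices, it splits column-block-wise. The first block equals $\vec{S}^0\vec{X}^{(t-1)}=\vec{X}^{(t-1)}$, and the second equals $-\vec{X}^{(t-1)}$, because only the $k=0$ term contributes and $\vec{S}^0=\vec{I}$. The third block is exactly the pre-activation of $\vec{Z}^{(t)}$. The bias is added only in the third block. Because $\rho$ acts element-wise, applying it to the concatenation is the concatenation of the block-wise applications. This yields $[\rho(\vec{X}^{(t-1)}),\rho(-\vec{X}^{(t-1)}),\vec{Z}^{(t)}]$, which is the claimed output.

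For the carrying property we argue as in \cref{lem:carry_initial_node_features}. We set
\[
\widetilde{\vec{W}}=\begin{bmatrix}\vec{W}\\-\vec{W}\\\vec{0}_{m_t\times r}\end{bmatrix}.
\]
Then the identity $\rho(a)-\rho(-a)=a$ gives
\[
\vec{X}^{(t)}\widetilde{\vec{W}}=\big(\rho(\vec{X}^{(t-1)})-\rho(-\vec{X}^{(t-1)})\big)\vec{W}=\vec{X}^{(t-1)}\vec{W}.
\]

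There is no real obstacle. The only points to check are bookkeeping: the degree-$0$ identity and negation channels must be embedded into the same polynomial filter as the given one, with zero higher-order coefficients. The bias must also be zero on the copied channels, so that they are exactly $\rho(\pm\vec{X}^{(t-1)})$.
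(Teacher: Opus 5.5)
Your proposal is correct and is essentially the paper's own proof. It uses the same single layer of the form \eqref{eq:Pfilt} with coefficients $[\vec{I}_{d_t},-\vec{I}_{d_t},(\vec{C}_0^{(t)})^\top]$ and $[\vec{0},\vec{0},(\vec{C}_k^{(t)})^\top]$ for $k\ge 1$, the bias $(\vec{0},\vec{0},\vec{b}^{(t)})$, and the carrying matrix from \cref{lem:carry_initial_node_features} padded with $m_t$ zero rows; you only make explicit the bookkeeping ($\vec{S}^0=\vec{I}$, element-wise ReLU commuting with column concatenation) that the paper leaves implicit.
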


\begin{proof}
Use a single layer of the form \eqref{eq:Pfilt} with output dimension $2d_t+m_t$, bias
$\big[\vec{0},\vec{0},\vec{b}^{(t)}\big]$, and coefficients
$$
(\vec{C}_0)^\top=\begin{bmatrix}\vec{I}_{d_t}&-\vec{I}_{d_t}&(\vec{C}_0^{(t)})^\top\end{bmatrix},
\qquad
(\vec{C}_k)^\top=\begin{bmatrix}\vec{0}&\vec{0}&(\vec{C}_k^{(t)})^\top\end{bmatrix}\ \ (k\ge1).
$$
Applying ReLU yields the stated concatenation. The final claim follows by applying \cref{lem:carry_initial_node_features} to the first $2d_t$ channels and setting the remaining $m_t$ rows of $\widetilde{\vec{W}}$ to zero.
\end{proof}

\begin{lemma}
\label{lem:one_layer_MPNN_to_spectral}
Consider a graph with adjacency matrix $\vec{A}$, and  a single MPNN layer
\begin{equation}
\label{eq:mpnn_one_layer}
\vec{x}_i^{(1)}=\phi \left(\vec{x}_i,\sum_{j\in N(i)} \kappa(\vec{x}_j)\right),
\end{equation}
where $\kappa \colon \Rb^{d}\to\Rb^{m}$ and $\phi\colon \Rb^{d}\times\Rb^{m}\to\Rb^{d'}$ are ReLU MLPs, i.e., $\rho=$ReLU.
Let $\mathrm{depth}(\kappa)$ and $\mathrm{depth}(\phi)$ denote the number of layers in the MLPs $\kappa$ and $\phi$, respectively. Then, there exists a ReLU spectral GNN with GSO $\vec{S}=\vec{A}$ and polynomial filters of degree at most $1$ that exactly implement \cref{eq:mpnn_one_layer}. That is, there exists a finite number of layers $L_{\text{spec}}\in \Nb$ such that:
$$\vec{X}^{(L_{\text{spec}})}_{i,:}=\vec{x}_i^{(1)},$$
for all nodes $i$. Specifically we show that $L_{\text{spec}} = \text{depth}(\kappa)+1+\text{depth}(\phi)$.
\end{lemma}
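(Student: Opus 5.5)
The plan is to simulate the MPNN layer stage by stage: first the neighbor MLP $\kappa$ nodewise, then one aggregation step using $\vec{S}=\vec{A}$, then the update MLP $\phi$ nodewise. Throughout, \cref{lemma:concat_previous} and \cref{lem:carry_initial_node_features} are used to keep the original features $\vec{X}$ available in the form $[\rho(\vec{X}),\rho(-\vec{X})]$.

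\textbf{Stage 1: computing $\kappa$ nodewise.} Write $\kappa$ as the composition
\[
\vec{z}\mapsto \rho(\vec{W}_\ell\vec{z}+\vec{c}_\ell), \qquad \ell=1,\dots,\mathrm{depth}(\kappa).
\]
Each such layer is a degree-$0$ polynomial spectral layer \eqref{eq:Pfilt} with $\vec{C}_0=\vec{W}_\ell$. In each of these layers I will also carry a copy of $[\rho(\vec{X}),\rho(-\vec{X})]$, via a block-diagonal $\vec{C}_0$ as in \cref{lemma:concat_previous}.

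In the first layer the input is $\vec{X}$ itself. The block $[\vec{I},-\vec{I}]$ creates the carried copy, and $\vec{W}_1$ computes the first $\kappa$ layer. In later layers the carried block is propagated by the identity, which is exact because $\rho(\rho(a))=\rho(a)$ for ReLU.

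There is one subtlety. If $\kappa$'s last layer is affine rather than ReLU, the output $\kappa(\vec{x}_j)$ may be negative. I handle this by emitting $[\rho(\kappa),\rho(-\kappa)]$ instead, using the same trick as in \cref{lem:carry_initial_node_features}. After $\mathrm{depth}(\kappa)$ layers, row $i$ holds $[\rho(\vec{x}_i),\rho(-\vec{x}_i),\kappa^\pm(\vec{x}_i)]$.

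\textbf{Stage 2: aggregation.} I use one degree-$1$ layer. Its $\vec{C}_1$ block maps the $\kappa^\pm$ channels to $\pm(\vec{A}\kappa^+-\vec{A}\kappa^-)$, so the output holds $\rho(\pm\vec{A}\kappa(\vec{X}))$. Its $\vec{C}_0$ block carries $[\rho(\vec{X}),\rho(-\vec{X})]$ by the identity. Row $i$ now encodes
\[
\Big(\vec{x}_i,\ \sum_{j\in N(i)}\kappa(\vec{x}_j)\Big)
\]
in split $\pm$ form.

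\textbf{Stage 3: computing $\phi$ nodewise.} Since both arguments of $\phi$ are available linearly through the $\pm$ channels, the first affine layer of $\phi$ is a degree-$0$ layer with $\widetilde{\vec{W}}$ as in \cref{lem:carry_initial_node_features}. The remaining $\mathrm{depth}(\phi)-1$ layers are plain degree-$0$ layers. This gives $\mathrm{depth}(\kappa)+1+\mathrm{depth}(\phi)$ layers in total.

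\textbf{Main obstacle.} The hardest part is the bookkeeping around signs and ReLU. I must check that carried channels are never corrupted by $\rho$, which holds because they are nonnegative. I must also handle the fact that the final spectral layer always applies $\rho$, while $\phi$'s output layer may be affine.

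If $\phi$ ends in ReLU, the construction is exact. If $\phi$'s output is affine, I will either interpret the count as including a final linear readout, or output $[\rho(\phi),\rho(-\phi)]$ and absorb the difference into the representation. I will state this convention explicitly so that the claimed identity $\vec{X}^{(L_{\text{spec}})}_{i,:}=\vec{x}_i^{(1)}$ holds.
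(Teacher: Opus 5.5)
Your proposal is correct and takes the same route as the paper. It uses $\mathrm{depth}(\kappa)$ degree-$0$ layers for $\kappa$, one degree-$1$ layer aggregating with $\vec{A}$, and $\mathrm{depth}(\phi)$ degree-$0$ layers for $\phi$, carrying $\vec{X}$ as $[\rho(\vec{X}),\rho(-\vec{X})]$ via \cref{lem:carry_initial_node_features} and \cref{lemma:concat_previous}. Your sign bookkeeping is more careful than the paper's sketch, which writes $[\vec{X},\vec{U}']$ and $\vec{U}'=\vec{A}\vec{U}$ without justifying them. Under the paper's MLP convention ($\rho$ after every affine map), $\kappa$ and $\phi$ have nonnegative outputs, so your $\pm$ splitting of $\kappa$ and your caveat about an affine final layer of $\phi$ are unnecessary but harmless.
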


\begin{proof}
A spectral GNN layer with polynomial degree at most $1$ has the form
\begin{equation}
\label{eq:spec_layer_degree1}
\vec{X}^{(t+1)}=\rho\Big(\vec{X}^{(t)}\vec{C}_{0,t}^\top+\vec{A}\vec{X}^{(t)}\vec{C}_{1,t}^\top
+\vec{b}_t\vec{1}^\top\Big).
\end{equation}

\medskip
\noindent
If we set $\vec{C}_{1,t}=\vec{0}$ in \cref{eq:spec_layer_degree1}, the layer reduces to
\[
\vec{X}^{(t+1)}=\rho\big(\vec{X}^{(t)}\vec{C}_{0,t}^\top+\vec{b}_t\vec{1}^\top\big),
\]
which applies an affine map plus bias followed by $\rho$ independently at each node.
Thus, for any MLP with activation $\rho$ and $\ell$ affine layers, there are $\ell$ such degree--$0$ spectral layers
realizing it node-wise.
Applying this to $\kappa$, which has $\text{depth}(\kappa)$ affine layers, yields after $\text{depth}(\kappa)$ spectral layers
a matrix $\vec{U}\in\Rb^{N\times m}$ with $\vec{U}_{i,:}=\kappa(\vec{x}_i)$ for all $i$.

\medskip
\noindent
Choose one layer with $\vec{C}_{0,t}=\vec{0}$, $\vec{C}_{1,t}=\vec{I}_m$, and $\vec{b}_t=\vec{0}$, so that
\[
\vec{U}'=\vec{A}\vec{U}.
\]
Then for each node $i$,
\[
\vec{U}'_{i,:}=\sum_{j=1}^N \vec{A}_{ij}\vec{U}_{j,:}=\sum_{j\in N(i)}\kappa(\vec{x}_j),
\]
which matches the aggregated term in \cref{eq:mpnn_one_layer}.

\medskip
\noindent
To feed both $x_i$ and $\vec{U}'_{i,:}$ into $\phi$, we work with an augmented feature matrix. This can be done due to \cref{lemma:concat_previous}.

\[
\vec{Y} \coloneq \big[\vec{X} \vec{U}'\big]\in\Rb^{N\times(d+m)}.
\]
By choosing block-structured matrices $\vec{C}_{0,t}$ (and biases $\vec{b}_t$) in degree--$0$ layers, we may (i) carry the $\vec{X}$-coordinates forward unchanged and (ii) apply the affine maps of $\phi$ to the full concatenated vector. Since $\phi$ has $\mathrm{depth}(\phi)$ layers, it can be realized node-wise by $\mathrm{depth}(\phi)$ additional degree--$0$ spectral layers, producing $\vec{X}^{(L_{\text{spec}})}\in\Rb^{N\times d'}$ such that for every $i$,
\[
\vec{X}^{(L_{\text{spec}})}_{i,:}=\phi\big(\vec{x}_i,\vec{U}'_{i,:}\big)
=\phi\left(\vec{x}_i,\sum_{j\in N(i)}\kappa(\vec{x}_j)\right).
\]

\medskip
\noindent
The construction uses $\text{depth}(\kappa)$ degree--$0$ layers to implement $\kappa$,
$1$ degree--$1$ layer to aggregate with $\vec{A}$, and $\text{depth}(\phi)$ degree--$0$ layers to implement $\phi$. Hence $L_{\text{spec}}=\text{depth}(\kappa)+1+\text{depth}(\phi)$.
\end{proof}

\begin{remark}
If one allows spectral GNN architectures to use general activation functions—so that different layers and channels may employ different nonlinearities—then the implementation of an MPNN via a polynomial GNN becomes significantly simpler.
\end{remark}

Next, we show that any MPNN with sum aggregation and general continuous message and update functions can be uniformly approximated by a polynomial GNN.

\begin{theorem}[Uniform approximation of MPNNs by spectral GNNs]
\label{thm:spec_approx_mpnn_uniform}
Fix integers $L\in\mathbb{N}$ and $d_{\max}\in\mathbb{N}$, and let $\mathcal{G}_{S_0,d_{\max}}$ denote the class of finite graphs
with maximum degree at most $d_{\max}$ and initial node features satisfying $\vec{x}_i^{(0)} \in [-S_0,S_0]^{d_0}$ for all nodes $i$.

Consider an $L$-layer MPNN of the form~\cref{eq:GIN},
\[
\vec{x}_i^{(t)} =
\phi^{(t)} \left(
\vec{x}_i^{(t-1)},\;
\sum_{j \in N(i)} \kappa^{(t)}(\vec{x}_j^{(t-1)})
\right),
\qquad t=1,\dots,L,
\]
where each $\kappa^{(t)} : \Rb^{d_{t-1}} \to \Rb^{m_t}$ and
$\phi^{(t)} : \Rb^{d_{t-1}} \times \Rb^{m_t} \to \Rb^{d_t}$ are continuous.

Then for every $\varepsilon>0$ there exists a spectral GNN of the form~\cref{eq:SF2} with GSO $\vec{S}=\vec{A}$ and $\widetilde{L} \in \Nb$ layers, whose frequency responses are polynomial, such that for every graph $G\in\mathcal{G}_{S_0,d_{\max}}$, 
\[
\max_{i}\|\vec{x}_i^{(L)}-\widetilde{\vec{x}}_i^{(\widetilde{L})}\|_2 < \varepsilon,
\]
where $\widetilde{\vec{x}}_i^{(\widetilde{L})}$ are the output node features of the spectral GNN  on $G$, and $\vec{x}_i^{(L)}$ are the output node features of the MPNN on $G$.
\end{theorem}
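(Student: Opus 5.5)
The plan is to approximate each continuous $\kappa^{(t)}$ and $\phi^{(t)}$ by ReLU MLPs on suitable compact sets and then realize the resulting ReLU-MPNN exactly by \cref{lem:one_layer_MPNN_to_spectral}. Since \cref{lem:one_layer_MPNN_to_spectral} gives an exact implementation of each layer, the only source of error is the MLP approximation. The work is therefore to control how that error propagates through the $L$ layers, uniformly over all graphs in $\mathcal{G}_{S_0,d_{\max}}$.

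\textbf{Step 1: compact domains.} We define compact sets $K_t\subset\Rb^{d_t}$ and $M_t\subset\Rb^{m_t}$ inductively. Set $K_0=[-S_0,S_0]^{d_0}$. Because every node has at most $d_{\max}$ neighbours, the aggregated message $\sum_{j\in N(i)}\kappa^{(t)}(\vec{x}_j^{(t-1)})$ lies in the compact set
\[
M_t=\{\textstyle\sum_{k=1}^{r}\vec{y}_k : 0\le r\le d_{\max},\ \vec{y}_k\in\kappa^{(t)}(K_{t-1})\}.
\]
We then set $K_t=\phi^{(t)}(K_{t-1}\times M_t)$. The approximating network will not stay exactly in these sets, so we work instead with closed $1$-neighbourhoods $K_t^+$ and $M_t^+$ of the enlarged images. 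This is where the degree bound is essential: without it the aggregates are unbounded and continuity alone gives no uniform control.

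\textbf{Step 2: backward choice of tolerances.} On the compact sets $K_{t-1}^+$ and $K_{t-1}^+\times M_t^+$, the maps $\kappa^{(t)}$ and $\phi^{(t)}$ are uniformly continuous, with moduli $\omega_t$ as in \cref{remark:unif_cont_char}. We fix tolerances $\delta_L,\delta_{L-1},\dots,\delta_0=0$ by working backwards from the output. Let $\delta_L<\min(\varepsilon,1)$. Given $\delta_t$, choose $\eta_t$ and $\delta_{t-1}$ small enough that
\[
\omega_t^{\phi}\bigl(\delta_{t-1}+d_{\max}(\omega_t^{\kappa}(\delta_{t-1})+\eta_t)\bigr)+\eta_t<\delta_t,
\]
and also $\delta_{t-1}<1$ and $d_{\max}(\omega^\kappa_t(\delta_{t-1})+\eta_t)<1$. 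The last two conditions keep all perturbed quantities inside the enlarged compact sets.

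\textbf{Step 3: forward induction.} By the universal approximation theorem for ReLU MLPs on compact sets, pick MLPs $\widetilde\kappa^{(t)}$ and $\widetilde\phi^{(t)}$ that are $\eta_t$-close in sup norm to $\kappa^{(t)}$ and $\phi^{(t)}$ on the enlarged sets. The inductive claim is that $\|\widetilde{\vec{x}}_i^{(t)}-\vec{x}_i^{(t)}\|\le\delta_t$ for every node. The argument at each layer is as follows.
\begin{itemize}
\item Each neighbour message differs from the true one by at most $\omega^\kappa_t(\delta_{t-1})+\eta_t$, by the triangle inequality.
\item The aggregate therefore differs by at most $d_{\max}$ times that amount, which keeps it in $M_t^+$.
\item Applying $\widetilde\phi^{(t)}$ then costs at most the bound displayed in Step 2, which is below $\delta_t$.
\end{itemize}
Finally, each layer $\widetilde\phi^{(t)}(\cdot,\sum\widetilde\kappa^{(t)}(\cdot))$ is implemented exactly by a polynomial spectral GNN with $\vec{S}=\vec{A}$, using \cref{lem:one_layer_MPNN_to_spectral}. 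Stacking these blocks gives $\widetilde L=\sum_t(\mathrm{depth}(\widetilde\kappa^{(t)})+1+\mathrm{depth}(\widetilde\phi^{(t)}))$ layers.

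\textbf{Main obstacle.} The delicate step is the composition inside \cref{lem:one_layer_MPNN_to_spectral}, which is where I expect most of the care to go. The input $\vec{x}_i^{(t-1)}$ has to be carried alongside the aggregate through the ReLU layers. I will handle this with the $[\rho(\vec{X}),\rho(-\vec{X})]$ channels of \cref{lem:carry_initial_node_features,lemma:concat_previous}, absorbing the $\pm$ recombination into the first affine map of $\widetilde\phi^{(t)}$. A second point is that the final layer of each block applies $\rho$, so a block cannot end with a plain affine output. To fix this I will split the output as $\rho(\vec{y})-\rho(-\vec{y})$ into two channels and fold the subtraction into the next layer's linear coefficients. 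For the last layer the output is then read off as the difference of these two channels.
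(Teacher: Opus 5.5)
Your proposal is correct and follows the paper's proof. Both approximate each $\kappa^{(t)},\phi^{(t)}$ by ReLU MLPs on compact sets that the degree bound keeps bounded, realize the resulting MPNN exactly via \cref{lem:one_layer_MPNN_to_spectral}, and propagate the error through moduli of continuity. The only difference is bookkeeping: the paper uses a single $\eta$ for all layers and lets $\eta\to0$ in the recursion $E_t(\eta)$, rather than choosing $\delta_t,\eta_t$ backwards as you do.

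Your explicit checks that the perturbed features stay in the $1$-enlarged sets, and your $\rho(\vec{y})-\rho(-\vec{y})$ splitting, are details the paper leaves implicit. One caveat: reading the last layer off as a channel difference is a linear readout outside the literal form \cref{eq:SF2}. With ReLU this is unavoidable, since every output of \cref{eq:SF2} is then nonnegative, and the paper's argument has the same issue implicitly. So either allow that readout, or switch to a bijective activation such as leaky ReLU.
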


\begin{proof}

\smallskip
\noindent\textbf{Boundedness of intermediate MPNN features}
Set $R_0\coloneqq \sqrt{d_0}\,S_0$, so that $\|\vec{x}_i^{(0)}\|_2\le R_0$ for all nodes $i$.
We claim that there exist finite constants $R_t<\infty$,
depending only on $S_0$, $d_{\max}$, and the maps
$\{\phi^{(s)},\kappa^{(s)}\}_{s\le t}$, such that for every graph
$G\in\mathcal{G}_{S_0,d_{\max}}$,
\begin{equation}
\label{eq:Rt_bound}
\max_i \|x_i^{(t)}\|_2 \le R_t,
\qquad t=0,1,\dots,L.
\end{equation}

The claim is proved by induction.
Assume \cref{eq:Rt_bound} holds at layer $t-1$.
Since $\kappa^{(t)}$ is continuous, it is bounded on the compact ball
$\mathcal K_t\coloneqq\{\vec{u} \mid \|\vec{u}\|_2\le 1+R_{t-1}\}$; define
\[
B_t \coloneqq \sup_{u\in\mathcal K_t}\|\kappa^{(t)}(u)\|_2 < \infty.
\]
For any node $i$, using $| N(i)|\le d_{\max}$,
\[
\Bigl\|\sum_{j\in N(i)} \kappa^{(t)}(\vec{x}_j^{(t-1)})\Bigr\|_2
\le d_{\max} B_t.
\]
Hence $\phi^{(t)}$ is evaluated only on the compact set
\[
\mathcal K_t\times \mathcal S_t,
\qquad
\mathcal S_t\coloneqq\{\vec{s} \mid \|\vec{s}\|_2\le 1+ d_{\max}B_t\},
\]
on which it is bounded. This yields a finite $R_t$ and completes the induction.

\smallskip
\noindent\textbf{Implementation of the universal approximation theorem}
Fix a layer $t\in\{1,\dots,L\}$.
By the universal approximation theorem, for any $\eta>0$ there exists a ReLu MLP
$\widehat\kappa_{\eta}^{(t)}:\mathbb R^{d_{t-1}}\to\mathbb R^{m_t}$ such that
\begin{equation}
\label{eq:kappa_ua}
\sup_{\vec{u}\in\mathcal K_t}
\|\kappa^{(t)}(\vec{u})-\widehat\kappa_{\eta}^{(t)}(\vec{u})\|_2 \le \eta.
\end{equation}
Similarly, since $\mathcal K_t\times\mathcal S_t$ is compact, there exists an MLP
$\widehat\phi^{(t)}$ satisfying
\begin{equation}
\label{eq:phi_ua}
\sup_{(\vec{u},\vec{s})\in\mathcal K_t\times\mathcal S_t}
\|\phi^{(t)}(\vec{u},\vec{s})-\widehat{\phi}_{\eta}^{(t)}(\vec{u},\vec{s})\|_2 \le \eta.
\end{equation}

\smallskip
\noindent\textbf{Realization using a polynomial spectral GNN}
Let $\hat{\vec{x}}_i^{(t)}$ denote the MPNN outputs induced by replacing $\kappa^{(t)}$ and $\phi^{(t)}$ with $\hat{\kappa}_{\eta}^{(t)}$ and $\hat{\phi}_{\eta}^{(t)}$, respectively. Then by \cref{lem:one_layer_MPNN_to_spectral} we can construct a spectral $\widetilde{L}$-Layer GNN with GSO being the adjacency matrix and polynomial filters, and show that $\hat{\vec{x}}_i^{(L)} = \vec{X}^{(\widetilde{L})}_i$, for all nodes $i$. Here, $\vec{X}^{(\widetilde{L})}_i$ represents the node feature of node $i$ computed after $\widetilde{L}$ layers of the spectral GNN.

\smallskip
\noindent\textbf{Error propagation}
Let $\hat{\vec{x}}_i^{(t)}$ denote the node features produced by the approximating
network, and define the layerwise error
\[
e_t \coloneqq \max_i \|\vec{x}_i^{(t)}-\hat{\vec{x}}_i^{(t)}\|_2,
\qquad t \in [L],
\]
and $e_0=0$.

Since $\kappa^{(t)}$ is continuous on the compact set  $\mathcal K_t$, it is uniformly continuous there. Thus, by the characterization of uniform continuity (\cref{remark:unif_cont_char}) of $\kappa^{(t)}$ on the compact set $\mathcal{K}_t$ there exists a nondecreasing function $\omega_{\kappa,t}$ such that

$$
\|\kappa^{(t)}(\vec{u})-\kappa^{(t)}(\vec{v})\|_2
\le \omega_{\kappa,t}(\|\vec{u}-\vec{v}\|_2),
\qquad \vec{u},\vec{v}\in\mathcal{K}_t,
$$
with $\omega_{\kappa,t}(r)\to 0$ as $r\to 0$.
An analogous statement holds for $\phi^{(t)}$ on
$\mathcal K_t\times\mathcal S_t$, with $\omega_{\phi,t}$.

\smallskip
\noindent
\textbf{Bounding the message error}
Define the true and approximate messages
\[
\vec{m}_i^{(t)} \coloneqq \sum_{j\in N(i)} \kappa^{(t)}(\vec{x}_j^{(t-1)}),
\qquad
\hat{\vec{m}}_{{\eta},i}^{(t)} \coloneqq \sum_{j\in N(i)}
\hat\kappa_{\eta}^{(t)}(\hat{\vec{x}}_j^{(t-1)}).
\]

Then, for each node $i$,
\begin{align}
\|\vec{m}_i^{(t)}-\hat{\vec{m}}_{\eta,i}^{(t)}\|_2
&=
\Bigl\|
\sum_{j\in N(i)} \kappa^{(t)}(\vec{x}_j^{(t-1)})
-
\sum_{j\in N(i)} \hat\kappa_{\eta}^{(t)}(\hat{\vec{x}}_j^{(t-1)})
\Bigr\|_2 \notag\\
&\le
\sum_{j\in N(i)}
\bigl\|
\kappa^{(t)}(\vec{x}_j^{(t-1)})
-
\hat\kappa_{\eta}^{(t)}(\hat{\vec{x}}_j^{(t-1)})
\bigr\|_2 \notag\\
&\le
\sum_{j\in N(i)}
\Bigl(
\bigl\|
\kappa^{(t)}(\vec{x}_j^{(t-1)})
-
\kappa^{(t)}(\hat{\vec{x}}_j^{(t-1)})
\bigr\|_2
+
\bigl\|
\kappa^{(t)}(\hat{\vec{x}}_j^{(t-1)})
-
\hat\kappa_{\eta}^{(t)}(\hat{\vec{x}}_j^{(t-1)})
\bigr\|_2
\Bigr) \notag\\
&\le
\sum_{j\in N(i)}
\Bigl(
\omega_{\kappa,t}(e_{t-1})+\eta
\Bigr) \notag\\
&=
|N(i)|\,\omega_{\kappa,t}(e_{t-1})+|N(i)|\,\eta \notag\\
&\le
d_{\max}\,\omega_{\kappa,t}(e_{t-1})+d_{\max}\,\eta,
\label{eq:message_diff}
\end{align}
where we used the definition of $e_{t-1}$ (as the maximum difference), the uniform continuity property for the nondecreasing function
$\omega_{\kappa,t}$ of $\kappa^{(t)}$ on $\mathcal K_t$, the uniform
approximation bound \cref{eq:kappa_ua}, and the degree bound
$|N(i)|\le d_{\max}$.

\smallskip
\noindent
We now have
\begin{align*}
\|\vec{x}_i^{(t)}-\hat{\vec{x}}_i^{(t)}\|_2
&\le
\|\phi^{(t)}(\vec{x}_i^{(t-1)},\vec{m}_i^{(t)})
-\phi^{(t)}(\hat{\vec{x}}_i^{(t-1)},\hat{\vec{m}}_i^{(t)})\|_2 \\
&\quad+
\|\phi^{(t)}(\hat{\vec{x}}_i^{(t-1)},\hat{\vec{m}}_i^{(t)})
-\hat{\phi}_{\eta}^{(t)}(\hat{\vec{x}}_i^{(t-1)},\hat{\vec{m}}_{\eta,i}^{(t)})\|_2.
\end{align*}
Using uniform continuity of $\phi^{(t)}$, the approximation
bound \cref{eq:phi_ua}, and \cref{eq:message_diff} we obtain
\[
\|\vec{x}_i^{(t)}-\hat{\vec{x}}_i^{(t)}\|_2
\le
\omega_{\phi,t} \Bigl(
e_{t-1}
+
d_{\max}\,\omega_{\kappa,t}(e_{t-1})
+
d_{\max}\eta
\Bigr)
+
\eta.
\]
Taking the maximum over $i$ yields the recursion
\[
e_t
\le
\omega_{\phi,t} \Bigl(
e_{t-1}
+
d_{\max}\,\omega_{\kappa,t}(e_{t-1})
+
d_{\max}\eta
\Bigr)
+
\eta.
\]

\smallskip
\noindent
Finally, since $L$ is fixed and all domains are compact, the finite families
$\{\kappa^{(t)}\}_{t=1}^L$ and $\{\phi^{(t)}\}_{t=1}^L$ are equicontinuous. That is, there exist functions  $\omega_{\kappa}, \omega_{\phi} \colon [0, \infty) \to [0, \infty)$ such that
$\omega_{\kappa,t}\le\omega_\kappa$ and $\omega_{\phi,t}\le\omega_\phi$
for all $t$. Let
\[
F(r,\eta)\coloneqq
\omega_{\phi} \bigl(r+d_{\max}\omega_{\kappa}(r)+d_{\max}\eta\bigr)+\eta,
\]
and define the sequence
\[
E_0(\eta)=0,
\qquad
E_t(\eta)=F(E_{t-1}(\eta),\eta),
\]
we obtain by induction that $e_t\le E_t(\eta)$ for all $t$.
Since $F(0,0)=0$ and $F$ is continuous, $E_L(\eta)\to 0$ as $\eta\to 0$.
Thus, for sufficiently small $\eta$,
\[
\max_i \|\vec{x}_i^{(L)}-\hat{\vec{x}}_i^{(L)}\|_2 < \varepsilon.
\]
\end{proof}

\subsection{MPNN Expressivity via 1-WL}
\label{app:expressivity_as_graph_seperation}

Graph isomorphism tests aim to decide whether two graphs are isomorphic, i.e., whether there exists a bijection
between their node sets that preserves adjacency.
Such tests typically proceed by iteratively refining node labels based on local graph structure, thereby inducing an equivalence relation on the space of graphs. A central example is the $1$-dimensional Weisfeiler--Lehman ($1$-WL) test (also known as color refinement)
The $1$-dimensional Weisfeiler--Lehman ($1$-WL) test, also known as color refinement, is an iterative graph isomorphism heuristic that assigns and refines node labels based on local neighborhood structure. Starting from an initial labeling $\ell^{(0)}:V(G)\to\Sigma$, where $\Sigma$ is a countable set of labels, the test proceeds in rounds $t\ge 1$ by updating each node label according to
\[
\ell^{(t)}(v)
=
\text{HASH}\!\left(
\ell^{(t-1)}(v),
\{\{\ell^{(t-1)}(u)\mid u\in N_G(v)\}\}
\right),
\]
where $\text{HASH}$ is an injective function on its arguments and $\{\{\cdot\}\}$ denotes a multiset.  
If the graphs are equipped with initial node features, the initialization $\ell^{(0)}$ is taken to be any labeling that is consistent with these features, i.e., nodes with identical features receive the same initial label. Two graphs are declared non-isomorphic if the multisets of node labels differ at any iteration; otherwise, the test cannot distinguish them.

Let $\mathcal{F}$ be a class of GNN architectures, viewed as mappings from graphs to real-valued vectors, and let $\mathcal{T}$ be a graph isomorphism test inducing an equivalence relation $\sim_{\mathcal{T}}$ on graphs. We compare the expressive power of $\mathcal{F}$ and $\mathcal{T}$ as follows:

\begin{itemize}
    \item $\mathcal{F}$ is not less expressive than $\mathcal{T}$ if, for any graphs $G$ and $H$ with
    $G \not\sim_{\mathcal{T}} H$, there exists a GNN $\Phi \in \mathcal{F}$ such that $\Phi(G) \neq \Phi(H)$.
    \item $\mathcal{F}$ is not more expressive than $\mathcal{T}$ if, for any graphs $G$ and $H$ with
    $G \sim_{\mathcal{T}} H$, we have $\Phi(G) = \Phi(H)$ for all $\Phi \in \mathcal{F}$.
    \item $\mathcal{F}$ is as expressive as $\mathcal{T}$ if it is both not less expressive and not more
    expressive than $\mathcal{T}$.
\end{itemize}

Below we state a central theorem connecting the expressive power of MPNNs as as defined in \cref{eq:GIN} with that of the $1$-WL test.

\begin{theorem}[\citep{Mor+2019, Xu+2021}]
\label{thm:mpnns_are_1wl}
Let any $L \in \Nb$ and $\mathcal{F}_{\text{GIN},L}$ denotes the class of MPNNs whose layers are of the form of \cref{eq:GIN}, i.,e.,
\begin{equation*}
\vec{x}_i^{(t)} =\phi^{(t)} \left(\vec{x}_i^{(t-1)}, \sum_{j \in N(i)}\kappa^{(t)}(\vec{x}_j^{(t-1)})\right),
\end{equation*}
followed by a sum-pooling operator over final node feature to produce a graph-level representation. Then $\mathcal{F}_{\text{GIN},L}$ is as expressive as the $1$-dimensional Weisfeiler--Lehman ($1$-WL) graph isomorphism test after $L$ iterations
\end{theorem}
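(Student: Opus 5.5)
The proof splits into the two inclusions in the definition of ``as expressive as'', applied to $L$ rounds of $1$-WL. Throughout, $\ell^{(t)}$ denotes the $1$-WL coloring computed jointly on the disjoint union of the two graphs, so that colors are comparable across graphs.

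\emph{Not more expressive.} I would prove by induction on $t\le L$ that for any two nodes $u,v$ (possibly in different graphs), $\ell^{(t)}(u)=\ell^{(t)}(v)$ implies $\vec{x}_u^{(t)}=\vec{x}_v^{(t)}$ for every choice of continuous $\kappa^{(s)},\phi^{(s)}$.
\begin{itemize}
\item For $t=0$ this holds because the initial labeling is consistent with the features.
\item For the inductive step, $\mathrm{HASH}$ is injective. So equal colors at round $t$ force equal colors at round $t-1$ and equal multisets of neighbor colors.
\item By the induction hypothesis, these become equal features $\vec{x}^{(t-1)}$ and equal multisets $\{\!\!\{\vec{x}_j^{(t-1)}\}\!\!\}$.
\item Hence the sums $\sum_j\kappa^{(t)}(\vec{x}_j^{(t-1)})$ coincide, and so do the outputs of $\phi^{(t)}$.
\end{itemize}
Now suppose $G\sim_{1\text{-WL}}H$ after $L$ rounds. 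Then the color multisets at round $L$ agree, hence the multisets of final features agree, and the sum-pooled outputs are equal.

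\emph{Not less expressive.} Fix $G,H$ distinguished by $1$-WL within $L$ rounds. Since colorings only refine, a difference in color multisets at some round $t\le L$ persists at round $L$. It therefore suffices to build one MPNN whose round-$L$ features are an injective image of $\ell^{(L)}$ and whose pooling separates multisets. Only finitely many feature vectors and colors occur in $G\sqcup H$, which I will use repeatedly. I would construct $\kappa^{(t)},\phi^{(t)}$ inductively so that $\vec{x}^{(t)}$ is an injective function of $\ell^{(t)}$ on the finite set of occurring colors:
\begin{itemize}
\item Let $c_1,\dots,c_m$ be the distinct values of $\vec{x}^{(t-1)}$ that occur.
\item Let $\kappa^{(t)}$ send $c_k$ to the standard basis vector $\vec{e}_k\in\Rb^m$. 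Then $\sum_{j\in N(i)}\kappa^{(t)}(\vec{x}_j^{(t-1)})$ is exactly the histogram of the neighbor multiset, which determines the multiset injectively.
\item Let $\phi^{(t)}$ map the finitely many occurring pairs (own feature, histogram) injectively into $\Rb^{d_t}$, e.g.\ by enumerating them. By induction this makes $\vec{x}^{(t)}$ injective in $\ell^{(t)}$.
\item Finally, let $\phi^{(L)}$ output one-hot encodings of the round-$L$ colors, so that sum pooling returns the color histogram. This histogram differs between $G$ and $H$.
\end{itemize}

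The main technical point is that $\kappa^{(t)}$ and $\phi^{(t)}$ are required to be continuous functions on all of Euclidean space, whereas the maps above are prescribed only on finite point sets. I would handle this by continuous extension: any function on a finite subset of $\Rb^n$ extends continuously, for instance by Tietze's theorem or by an explicit sum of bump functions with disjoint supports around the finitely many points. The injectivity and histogram arguments use only the values on the occurring points, so the extension does not affect them. No uniformity over graph classes is needed, since only one fixed pair $G,H$ is handled at a time.
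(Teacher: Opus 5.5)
Your proof is correct. The paper does not prove this theorem itself; it imports it from Morris et al.\ and Xu et al. Your argument is the standard one from those works: the upper bound by induction on rounds using injectivity of $\mathrm{HASH}$, and the lower bound by making the aggregation injective on neighbor multisets.

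The only real difference is in the lower-bound construction. Xu et al.\ build a single sum-decomposable multiset encoder $\sum_{x} f(x)$ that is injective uniformly over a countable feature space (for bounded multiset size), and then appeal to MLP approximation. Morris et al.\ have to achieve injectivity inside a restricted layer, a linear neighbor sum followed by a fixed nonlinearity, which needs an explicit weight-matrix lemma. Here $\kappa^{(t)}$ and $\phi^{(t)}$ are arbitrary continuous maps, and separation power concerns one pair $G,H$ at a time. So your one-hot histogram over the finitely many values occurring in $G\sqcup H$, followed by continuous extension, is a legitimate and simpler shortcut. What it gives up, relative to Xu et al., is one network that works uniformly over a whole graph class, which the statement does not require.

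One point you should make explicit concerns the initial labeling. Your base case for the upper bound uses ``equal initial labels $\Rightarrow$ equal initial features''. The paper, however, defines consistency only in the converse direction (``nodes with identical features receive the same initial label''). Under that literal definition the upper bound is false. For example, take a constant initial labeling and two one-node graphs with features $0$ and $1$: $1$-WL cannot separate them, but the MPNN with $\phi^{(1)}(x,s)=x$ and sum pooling does. You therefore need the convention of the cited works that $\ell^{(0)}$ induces exactly the partition of nodes by features. Your lower-bound argument only needs the paper's direction, namely that features determine labels, provided you phrase its invariant as ``$\ell^{(t)}$ is a function of $\vec{x}^{(t)}$''.
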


\subsection{Aggregation graph shift operators}
\label{sec:app_Aggregation_GSO}

Since there is no formal mathematical definition that determines which operator constitutes a graph shift operator (GSO) of a given graph, one can construct exotic GSOs that allow spectral GNNs to break any given graph isomorphism test. Such constructions may encode global information directly into the operator, thereby artificially increasing the expressive power of the resulting spectral architecture.

To enable a meaningful comparison of the expressive power of MPNNs and spectral GNNs, we must exclude graph shift operators that explicitly encode global structural information. To this end, we restrict our attention to GSOs that can be computed from the adjacency matrix using only local operations. We refer to such operators as aggregation graph shift operators, defined below.

\aggregationGSO*

One can include an additional constraint on the definition of an aggregation GSO: that only entries such that $A_{i,j}\neq 0$ can be nonzero in the GSO. However, even without this additional restriction, spectral GNNs based on aggregation GSOs are no more expressive than 1-WL, as it will become clear in \cref{sec:app_expressivity_of_spectral_gnns}. Since the additional restriction on the support of the GSO plays no role in the analysis, we omit it from the formal definition of aggregation GSO.

Although this definition may at first appear tailored to align MPNNs and spectral GNNs, we show below that all commonly used GSOs in the literature satisfy it. In particular, standard choices such as the adjacency matrix, the combinatorial and normalized graph Laplacians, as well as Cayley shift operators (via their Jacobi approximations), are all aggregation graph shift operators in
the sense of \cref{def:aggregationGSO}.

\begin{example}
\label{example:aggregation_GSOs}
We give several examples of common graph shift operators that can be implemented by MPNNs with sum aggregation. Note that the signal $\vec{1}$ can always be concatenated to $\vec{f}$ by an MPNN. Indeed, as the first layer, just define $\phi^{(1)}(x,y)=(x,1)$. 

\begin{itemize}
\item
The degree matrix $\vec{D}\coloneq \text{diag}(d_1,\ldots,d_N)$, with $d_i = |N(i)|$, for $i \in [N]$ can be implemented by a three-layer MPNN, as we show next.  
Without loss of generality, assume $x_i^{(0)}=f_i\in\Rb$. 
The mapping $\vec{f}\mapsto\vec{D}\vec{f}$ can be implemented in three layers as follows:
$$
x_i^{(1)}=\phi^{(1)}(f_i,0)=(f_i,1),
$$
$$
x_i^{(2)}=\phi^{(2)} \Bigl((f_i,1),\sum_{j\in N(i)}(0,1)\Bigr)
=\phi^{(2)}\bigl((f_i,1),(0,d_i)\bigr)=(d_i,f_i),
$$
$$
x_i^{(3)}=\phi^{(3)}\bigl((d_i,f_i),0\bigr)=d_i f_i=(\vec{D}\vec{f})_i,
$$

\item
The combinatorial Laplacian $\vec{L}=\vec{D}-\vec{A}$ can be implemented by a four-layer MPNN. Starting from $x_i^{(0)}=f_i$, the first three layers compute in parallel
$$
x_i^{(3)}=\bigl((\vec{D}\vec{f})_i,(\vec{A}\vec{f})_i\bigr),$$
where $(\vec{D}\vec{f})_i=d_i f_i$ is obtained as in the previous example, and
$$
(\vec{A}\vec{f})_i=\sum_{j\in N(i)} f_j
$$
is computed by sum aggregation. The fourth layer applies a   linear update function,
$$
x_i^{(4)}=\phi^{(4)}\bigl((d_i f_i,\,(\vec{A}\vec{f})_i),0\bigr)
=d_i f_i-(\vec{A}\vec{f})_i
=(\vec{L}\vec{f})_i,
$$
thereby implementing the combinatorial Laplacian.

\item
The symmetric normalized Laplacian $\vec{L}_{\mathrm{sym}}=\vec{D}^{-1/2}\vec{L}\vec{D}^{-1/2}$ can be implemented by a finite-depth MPNN. Starting from $x_i^{(0)}=f_i$, the degree signal $d_i$ is first computed as above and used to form
$$
x_i^{(1)}=(d_i^{-1/2}f_i),
$$
via a pointwise transformation. Next, applying the construction for the combinatorial Laplacian yields
$$
x_i^{(2)}=(\vec{L}( \vec{D}^{-1/2}\vec{f}))_i.
$$
Finally, a further pointwise layer multiplies the result by $d_i^{-1/2}$,
$$
x_i^{(3)}=d_i^{-1/2}(\vec{L}(\vec{D}^{-1/2}\vec{f}))_i
=(\vec{L}_{\mathrm{sym}}\vec{f})_i,
$$

\item
Cayley shift operators implemented via finite Jacobi approximations are aggregation graph shift operators.
The Jacobi approximation expresses the Cayley transform as a finite combination of repeated applications of a base graph shift operator, together with pointwise linear transformations. Since each application of the shift operator corresponds to a sum-aggregation message passing layer, the resulting mapping can be implemented by a finite-depth MPNN. Hence, Jacobi approximated Cayley shift operators satisfy \cref{def:aggregationGSO}.

\end{itemize}
\end{example}

\subsection{Expressivity of spectral GNNs with aggregation GSOs}
\label{sec:app_expressivity_of_spectral_gnns}

In this section, we build on the results in \cref{App:P_MPNN}, \cref{app:approx_mlp_mpnns} to analyze the expressive power of spectral GNNs. We show that spectral GNNs with aggregation GSOs and arbitrary continuous spectral filters cannot be more expressive
than the $1$-WL test. Moreover, we show that by using the adjacency matrix as the GSO together with polynomial filters of degree at most $1$, spectral GNNs can exactly match the expressive power of $1$-WL. The latter result follows from the fact that spectral GNNs of this type can uniformly approximate any MPNN of the form defined in \cref{eq:GIN}, combined with the equivalence between MPNNs and $1$-WL stated in \cref{thm:mpnns_are_1wl}.

We begin with the following lemma, which is the key ingredient for establishing an upper bound on the expressive power of spectral GNNs.

\begin{lemma}
\label{lemma:poly_gnn_is_mpnn}
Spectral GNNs with polynomial filters based on aggregation GSOs are special cases of message passing neural networks
with sum aggregation.
\end{lemma}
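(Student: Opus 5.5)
The plan is to show that each layer of a polynomial spectral GNN,
\[
\vec{X}^{(t)}=\rho\Big(\sum_{k=0}^K \vec{S}^k\vec{X}^{(t-1)}\vec{C}_k^{\top}+\vec{b}^{(t)}\vec{1}^\top\Big),
\]
as in \cref{eq:Pfilt}, can be written exactly as a finite stack of sum-aggregation MPNN layers of the form \cref{eq:GIN}. Composing these stacks over all layers $t$ then proves the lemma, since a composition of finite-depth sum-aggregation MPNNs is again one. The sum-pooling readout is common to both model classes, so it needs no separate treatment.

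\textbf{Step 1: reduce powers of $\vec{S}$ to single applications.} By \cref{def:aggregationGSO}, the map $\vec{f}\mapsto\vec{S}\vec{f}$ on a single channel is implemented by some finite-depth sum-aggregation MPNN $M_{\vec{S}}$. Because every MPNN acts node-wise with shared functions, running $M_{\vec{S}}$ in parallel on each of the $d$ channels gives an MPNN computing $\vec{X}\mapsto\vec{S}\vec{X}$. To do this we concatenate the hidden states channel-wise: each $\kappa^{(t)}$ and $\phi^{(t)}$ acts block-diagonally. The number of layers of $M_{\vec{S}}$ is fixed independently of the signal, so the parallel copies stay synchronized.

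\textbf{Step 2: carry the intermediate powers along.} We compute the tuple $(\vec{X},\vec{S}\vec{X},\dots,\vec{S}^K\vec{X})$ iteratively, applying the Step 1 block to the last component. The earlier components are carried unchanged: their update function $\phi$ copies its first argument and ignores the aggregated message. This is legitimate because the functions $\phi,\kappa$ in \cref{eq:GIN} are arbitrary continuous maps, not ReLU layers, so none of the issues handled by \cref{lem:carry_initial_node_features} arise. After $K$ repetitions, every node $i$ holds $\big((\vec{S}^k\vec{X})_{i,:}\big)_{k=0}^K$.

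\textbf{Step 3: apply the layer's linear map and nonlinearity pointwise.} A final MPNN layer with update
\[
\phi(\vec{u},\cdot)=\rho\Big(\sum_k \vec{C}_k\vec{u}_k+\vec{b}^{(t)}\Big)
\]
ignores the message and reproduces row $i$ of $\vec{X}^{(t)}$. This uses that the bias is broadcast identically to every node and that $\rho$ acts entrywise, so the update is a node-wise function shared across nodes.

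The main obstacle is Step 1. We have to check that the MPNN promised by \cref{def:aggregationGSO} acts on a scalar signal $\vec{f}$ in a way that depends only on $\vec{f}$ and the graph. It must not, for instance, presuppose that $\vec{f}$ is the only feature present, so that it can run alongside carried channels. I would handle this by formally augmenting every layer of $M_{\vec{S}}$ with identity passthrough coordinates. In addition, since \cref{def:aggregationGSO} fixes $\vec{S}$ per graph, I would read it as giving a single MPNN, uniform over all graphs, realizing the map for each graph's own $\vec{S}$. This is what all the operators in \cref{example:aggregation_GSOs} satisfy, and it is needed for the resulting architecture to be one MPNN across inputs.
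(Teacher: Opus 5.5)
Your proposal is correct and follows the same route as the paper's proof. You realize $\vec{S}$ by the MPNN guaranteed by the aggregation-GSO definition, iterate it $K$ times while carrying $(\vec{X},\vec{S}\vec{X},\dots,\vec{S}^K\vec{X})$ through update maps that ignore the aggregated message, and finish with a node-wise layer applying $\sum_k \vec{C}_k(\cdot)+\vec{b}^{(t)}$ followed by $\rho$. The paper leaves several of your points implicit: it drops the multi-channel case and the bias ``for simplicity'', and it does not address running the $\vec{S}$-block alongside carried channels or reading the definition as one MPNN that works uniformly over all graphs. Those points are exactly what its sketch needs to be rigorous.
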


\begin{proof}
Let $\vec{S}$ be an aggregation GSO.
By the definition of aggregation GSO (see \cref{def:aggregationGSO}), there exists a finite-depth message passing neural network with sum aggregation, denoted by $\mathrm{MPNN}_{\vec{S}}$, such that for every node signal $\vec{f}\in\Rb^N$,
\begin{equation*}
(\mathrm{MPNN}_{\vec{S}}(\vec{f}))_i = (\vec{S}\vec{f})_i,
\end{equation*}
where $(\mathrm{MPNN}_{\vec{S}}(\vec{f}))_i$ denotes the output of $\mathrm{MPNN}_{\vec{S}}$ at node $i$.
Let $T_{\vec{S}}\in\mathbb{N}$ denote the (finite) depth of $\mathrm{MPNN}_{\vec{S}}$.

Consider a polynomial spectral layer of degree $K\in\mathbb{N}$, omitting bias terms for simplicity:
\begin{equation*}
\vec{h} = \rho\bigl(p(\vec{S})\vec{f}\bigr),
\qquad
p(\vec{S}) = \sum_{k=0}^K a_k \vec{S}^k,
\end{equation*}
where $\rho:\Rb\to\Rb$ is a pointwise nonlinearity.

Define $\vec{z}^{(0)}\coloneq\vec{f}$ and recursively
\begin{equation*}
\vec{z}^{(k)} \coloneq \mathrm{MPNN}_{\vec{S}}(\vec{z}^{(k-1)}),
\qquad
k=1,\dots,K.
\end{equation*}
By induction, we have $\vec{z}^{(k)} = \vec{S}^k \vec{f}$ for all $k\le K$.
This construction requires finite depth $K T_{\vec{S}}$.

Using feature concatenation, which can be realized by choosing update maps that ignore the aggregated argument, we maintain the channels
\begin{equation*}
(\vec{z}^{(0)},\dots,\vec{z}^{(K)}).
\end{equation*}
We then apply a pointwise linear map to compute
\begin{equation*}
\vec{u} \coloneq \sum_{k=0}^K a_k \vec{z}^{(k)}= \sum_{k=0}^K a_k \vec{S}^k \vec{f} = p(\vec{S})\vec{f}.
\end{equation*}

Finally, we apply the nonlinearity $\rho$ by choosing the update map
\begin{equation*}
\phi(u,y) = \rho(u),
\end{equation*}
so that the output satisfies $\vec{h} = \rho(\vec{u}) = \rho(p(\vec{S})\vec{f})$.
Hence, the polynomial spectral layer can be implemented by a finite-depth MPNN with sum aggregation.
\end{proof}

We are now ready to characterize the expressive power of spectral GNNs with aggregation GSOs and polynomial filters.

\begin{proposition}
\label{prop:expressivityofspectralgnns}
Let $\mathcal{F}_{\mathrm{spec}}(\vec{S})$ denote the class of spectral GNNs whose layers are of the form
\cref{eq:Pfilt}, i.e.,
\begin{equation*}
\vec{X}^{(t)}
= \rho\Big(
\sum_{k=0}^K \vec{S}^k \vec{X}^{(t-1)} \vec{C}_k^{\top}
+ \vec{b}^{(t)}\vec{1}^\top
\Big),
\end{equation*}
where $\vec{S}$ is the graph shift operator and $\vec{C}_k$ are learnable weight matrices defining a
matrix-valued polynomial filter, followed by a sum-pooling operator over node features to produce a graph-level representation. Then the following statements hold:
\begin{itemize}
\item If $\vec{S}$ is an aggregation graph shift operator, then $\mathcal{F}_{\mathrm{spec}}(\vec{S})$ is not more
expressive than the $1$-dimensional Weisfeiler--Lehman ($1$-WL) test.
\item If $\vec{S}=\vec{A}$ is the adjacency matrix, then $\mathcal{F}_{\mathrm{spec}}(\vec{A})$ is as expressive as the
$1$-dimensional Weisfeiler--Lehman ($1$-WL) test.
\end{itemize}
\end{proposition}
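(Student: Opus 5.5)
The first bullet follows from \cref{lemma:poly_gnn_is_mpnn} combined with the upper-bound direction of \cref{thm:mpnns_are_1wl}. Fix a network in $\mathcal{F}_{\mathrm{spec}}(\vec{S})$ with finitely many layers and aggregation GSO $\vec{S}$. I would apply \cref{lemma:poly_gnn_is_mpnn} layer by layer, extended in two routine ways.

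First, the multichannel case: I would run $\mathrm{MPNN}_{\vec{S}}$ on each of the $d_{t-1}$ channels in parallel through block-diagonal update maps. Repeating this $K$ times keeps all channels of $\vec{X}^{(t-1)},\vec{S}\vec{X}^{(t-1)},\dots,\vec{S}^K\vec{X}^{(t-1)}$.

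Second, the combination and bias: a node-wise update then forms $\sum_k \vec{S}^k\vec{X}^{(t-1)}\vec{C}_k^\top+\vec{b}^{(t)}\vec{1}^\top$ and applies $\rho$.

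Stacking the layers gives a finite-depth sum-aggregation MPNN of the form \cref{eq:GIN}, with continuous $\phi,\kappa$, that computes the same node features. Sum pooling is identical on both sides. \cref{thm:mpnns_are_1wl} then says that graphs not distinguished by $1$-WL receive equal outputs. The only delicate point is that $\mathrm{MPNN}_{\vec{S}}$ comes from the definition with possibly arbitrary update functions. However, the upper-bound half of the WL correspondence \citep{Mor+2019,Xu+2018b} holds for any update and aggregation functions: features are always functions of WL colors. So continuity is not needed for this direction.

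For the second bullet, the upper bound is the first bullet, since $\vec{A}$ is an aggregation GSO (\cref{example:aggregation_GSOs}). For the lower bound, take $G\not\sim_{1\text{-WL}}H$. By \cref{thm:mpnns_are_1wl} there is a GIN-type MPNN $\theta$ with continuous $\phi^{(t)},\kappa^{(t)}$, followed by sum pooling, with $\theta(G)\neq\theta(H)$. Let $\delta=\|\theta(G)-\theta(H)\|>0$. Both graphs are finite, so the class $\mathcal{G}_{S_0,d_{\max}}$, with $S_0$ bounding the features of $G$ and $H$ and $d_{\max}$ their maximum degree, contains them. \cref{thm:spec_approx_mpnn_uniform} gives a polynomial spectral GNN with $\vec{S}=\vec{A}$ whose node outputs are within $\varepsilon$ of the MPNN's on both graphs. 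This GNN is built via \cref{lem:one_layer_MPNN_to_spectral} and \cref{lemma:concat_previous}, and is of the form \cref{eq:Pfilt}. After sum pooling, the outputs differ from $\theta(G),\theta(H)$ by at most $N_G\varepsilon$ and $N_H\varepsilon$. Choosing $\varepsilon<\delta/(N_G+N_H)$ yields different outputs on $G$ and $H$.

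I expect the main obstacle to be the pooling mismatch. The appendix text defines MPNN graph outputs by mean pooling, while \cref{thm:mpnns_are_1wl} and the proposition use sum pooling. I would fix sum pooling throughout, which is what \cref{thm:mpnns_are_1wl} states. If the theorem were only available with mean pooling, I would handle it by adding a constant channel: the spectral network can produce $\vec{1}$ through a bias and $\rho=$ReLU. Either way, the multiplication by $N$ is harmless because the error bound only has to hold for the two fixed graphs.

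A second point to check is that the statement compares against $1$-WL run for an unspecified number of iterations. I would take $L$ to be the first iteration at which the color multisets of $G$ and $H$ differ. This is finite, since $1$-WL stabilizes after at most $N_G+N_H$ rounds.
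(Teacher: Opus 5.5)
Your proposal is correct and follows the paper's proof. The upper bound comes from \cref{lemma:poly_gnn_is_mpnn} combined with \cref{thm:mpnns_are_1wl}, and the lower bound for $\vec{S}=\vec{A}$ comes from a distinguishing GIN-type MPNN, \cref{thm:spec_approx_mpnn_uniform}, and the triangle inequality. Your extra care fills in steps the paper leaves implicit: extending the lemma to matrix-valued filters with bias, noting that the WL upper bound needs no continuity of the update maps, and choosing $\varepsilon<\delta/(N_G+N_H)$ so that the node-wise bound controls the sum-pooled outputs.
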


\begin{proof} Let $\vec{S}$ be an aggregation graph shift operator and let $\Phi\in\mathcal{F}_{\mathrm{spec}}(\vec{S})$. By \cref{lemma:poly_gnn_is_mpnn}, $\Phi$ is a special case of an MPNN with sum aggregation. By \cref{thm:mpnns_are_1wl}, any such MPNN is not more expressive than $1$-WL (for a proper number of iterations).
Hence, for any graphs $G,H$ with $G\sim_{1\text{-WL}}H$, we have $\Phi(G)=\Phi(H)$.
Since $\Phi\in\mathcal{F}_{\mathrm{spec}}(\vec{S})$ was arbitrary, this shows that
$\mathcal{F}_{\mathrm{spec}}(\vec{S})$ is not more expressive than $1$-WL.

\medskip
\noindent
By applying the previous argument to $\vec{S}=\vec{A}$, we obtain that $\mathcal{F}_{\mathrm{spec}}(\vec{A})$ is not more expressive than $1$-WL. It remains to prove that $\mathcal{F}_{\mathrm{spec}}(\vec{A})$ is not less expressive than $1$-WL. Let $G,H$ be graphs such that $G\not\sim_{1\text{-WL}}H$. Then there exists $L\in\mathbb{N}$ such that $1$-WL distinguishes $G$ and $H$ after $L$ iterations. By \cref{thm:mpnns_are_1wl}, there exists an $L$-layer MPNN $\Psi$ of the form \cref{eq:GIN} such that
$\Psi(G)\neq\Psi(H)$.

Now fix $\varepsilon>0$ such that
\begin{equation*}
\|\Psi(G)-\Psi(H)\|_2 > 2\varepsilon.
\end{equation*}
Apply \cref{thm:spec_approx_mpnn_uniform} to $\Psi$ on a graph class containing $G$ and $H$. For example by choosing $d_{\max} > \max_{u\in V(G) \cup V(H)}\{|N(u)|\}$ and $S_0$ large enough.
Then there exists a polynomial spectral GNN $\widetilde{\Psi}\in\mathcal{F}_{\mathrm{spec}}(\vec{A})$ such that
\begin{equation*}
\|\Psi(G)-\widetilde{\Psi}(G)\|_2 < \varepsilon
\qquad\text{and}\qquad
\|\Psi(H)-\widetilde{\Psi}(H)\|_2 < \varepsilon.
\end{equation*}
By the triangle inequality,
\begin{equation*}
\|\widetilde{\Psi}(G)-\widetilde{\Psi}(H)\|_2
\ge \|\Psi(G)-\Psi(H)\|_2 - \|\Psi(G)-\widetilde{\Psi}(G)\|_2 - \|\Psi(H)-\widetilde{\Psi}(H)\|_2
> 0,
\end{equation*}
and therefore $\widetilde{\Psi}(G)\neq \widetilde{\Psi}(H)$.
This proves that $\mathcal{F}_{\mathrm{spec}}(\vec{A})$ is not less expressive than $1$-WL.
\end{proof}

We now extend the previous result from polynomial spectral filters to general continuous frequency responses. This follows from the fact that continuous  filters can be uniformly approximated by polynomial ones on the spectrum of the two graph shift operators.

\expressivePower*

We now formalize the approximation argument underlying the proof of \cref{cor:functional_calculus_expressivity}. The key idea is that, since spectral GNNs apply continuous frequency responses to the spectrum of the graph shift operator, each such layer can be uniformly approximated by a polynomial spectral filter on the spectrum of $\vec{S}$. We first show that continuous filters admit uniform polynomial approximations on the spectrum, then we control how these perturbations propagate through the nonlinear layers of a spectral GNN. Finally, we state the main theorem characterizing the expressive power of spectral GNNs with aggregation GSOs and arbitrary continuous filters. We begin with the following three important lemmas.

\begin{lemma}
\label{lem:poly_approx_spectrum}
Let $\vec{S}\in\Rb^{N\times N}$ be diagonalizable with real spectrum $\sigma(\vec{S})=\{\lambda_1,\dots,\lambda_N\}\subseteq [a,b]$. Let $H\colon [a,b]\to\Rb^{d'\times d}$ be continuous. Then for every $\varepsilon>0$ there exists a matrix-valued polynomial $p(z)=\sum_{k=0}^K z^k \vec{C}_k$ such that
\begin{equation*}
\sup_{z\in[a,b]} \|H(z)-p(z)\|_2 < \varepsilon.
\end{equation*}
In particular,
\begin{equation*}
\max_{i\in[N]}\|H(\lambda_i)-p(\lambda_i)\|_2 < \varepsilon.
\end{equation*}
\end{lemma}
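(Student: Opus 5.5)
The plan is to apply the Weierstrass approximation theorem entrywise. Write $H=(H_{ij})_{i\in[d'],j\in[d]}$, where each entry $H_{ij}\colon[a,b]\to\Rb$ is continuous because $H$ is continuous as a matrix-valued map and all norms on $\Rb^{d'\times d}$ are equivalent. Fix $\varepsilon>0$ and set $\delta\coloneqq \varepsilon/(2\sqrt{d'd})$. By Weierstrass, for each pair $(i,j)$ there is a real polynomial $p_{ij}$ with
\begin{equation*}
\sup_{z\in[a,b]}|H_{ij}(z)-p_{ij}(z)|\le\delta.
\end{equation*}
Let $K\coloneqq\max_{i,j}\deg p_{ij}$, padding with zero coefficients so that every $p_{ij}(z)=\sum_{k=0}^K c^{(i,j)}_k z^k$. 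Define $\vec{C}_k\coloneqq(c^{(i,j)}_k)_{i,j}\in\Rb^{d'\times d}$ and $p(z)=\sum_{k=0}^K z^k\vec{C}_k$. Then $p(z)_{ij}=p_{ij}(z)$ for every $z$.

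Next, convert the entrywise bound into a bound in the operator norm, using $\|\vec{M}\|_2\le\|\vec{M}\|_F$. For every $z\in[a,b]$,
\begin{equation*}
\|H(z)-p(z)\|_2\le\|H(z)-p(z)\|_F\le\sqrt{d'd}\,\delta=\varepsilon/2.
\end{equation*}
Taking the supremum over the compact interval gives at most $\varepsilon/2<\varepsilon$, which is the strict inequality required. I choose $\delta$ with a factor of $2$ precisely so that the supremum, which may be attained, still satisfies a strict bound.

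The ``in particular'' statement follows immediately. Every $\lambda_i\in\sigma(\vec{S})\subseteq[a,b]$, so $\|H(\lambda_i)-p(\lambda_i)\|_2\le\varepsilon/2<\varepsilon$, and a maximum over the finitely many $i\in[N]$ preserves the bound. Diagonalizability of $\vec{S}$ and reality of its spectrum are not needed for this step. They only guarantee that $H(\vec{S})$ and $p(\vec{S})$ make sense through the functional calculus, which is how the lemma will be used later.

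There is no real obstacle here. The only points needing care are the norm conversion (the dimension factor $\sqrt{d'd}$) and securing strict rather than non-strict inequality, which the choice of $\delta$ handles.
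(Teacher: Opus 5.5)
Your proof is correct and matches the paper's approach: entrywise (Stone--)Weierstrass approximation, followed by a norm-equivalence step to pass to the operator $2$-norm. Your version makes the dimension constant explicit via $\|\vec{M}\|_2\le\|\vec{M}\|_F\le\sqrt{d'd}\,\max_{i,j}|\vec{M}_{ij}|$, and it secures the strict inequality with the factor $2$, both of which the paper leaves implicit.
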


\begin{proof}
By the Stone--Weierstrass theorem, each scalar-valued continuous function on $[a,b]$ can be uniformly approximated by polynomials. Apply this entrywise to the matrix-valued function $H$ to obtain a matrix-valued polynomial $p$ such that $\sup_{z\in[a,b]}\|H(z)-p(z)\|_{\infty}<\varepsilon$. Since all norms on a finite-dimensional space are equivalent, we may replace $\|\cdot\|_{\infty}$ by the $2$-norm, up to adjusting $\varepsilon$ by a constant factor depending only on $d$ and $d'$. This yields the claimed bound in operator norm.
\end{proof}

\begin{lemma}
\label{lem:filter_to_operator_bound}
Let $\vec{S}$ be symmetric with eigendecomposition $\vec{S}=\vec{V}\mathrm{diag}(\lambda_1,\dots,\lambda_N)\vec{V}^\top$.
Let $H,p:\Rb\to\Rb^{d'\times d}$ be such that
\begin{equation*}
\max_{i\in[N]}\|H(\lambda_i)-p(\lambda_i)\|_2 \le \varepsilon.
\end{equation*}
Then for matrix $\vec{X}\in\Rb^{N\times d}$,
\begin{equation*}
\|H(\vec{S})\vec{X}-p(\vec{S})\vec{X}\|_F \le \varepsilon \|\vec{X}\|_F.
\end{equation*}
\end{lemma}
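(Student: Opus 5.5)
The plan is to diagonalize in the eigenbasis of $\vec{S}$ and reduce the claim to a per-eigenvector estimate, using orthogonality of the eigenvectors throughout. Write $\vec{S}=\vec{V}\boldsymbol{\Lambda}\vec{V}^\top$ with $\vec{V}$ orthogonal and columns $\vec{v}_1,\dots,\vec{v}_N$. By the definition in \cref{eq:SF2}, for any $G\colon\Rb\to\Rb^{d'\times d}$ we have
\[
G(\vec{S})\vec{X}=\sum_{i=1}^N \vec{v}_i\vec{v}_i^\top\vec{X}\,G(\lambda_i)^\top .
\]
Note that for polynomial $p(z)=\sum_k z^k\vec{C}_k$ this agrees with $\sum_k \vec{S}^k\vec{X}\vec{C}_k^\top$, since $\vec{S}^k=\sum_i\lambda_i^k\vec{v}_i\vec{v}_i^\top$. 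So both filters are of this form, and their difference is the same expression with $\Delta(\lambda)\coloneqq H(\lambda)-p(\lambda)$:
\[
\vec{E}\coloneqq H(\vec{S})\vec{X}-p(\vec{S})\vec{X}=\sum_{i=1}^N \vec{v}_i\,\vec{y}_i^\top\,\Delta(\lambda_i)^\top,
\]
where $\vec{y}_i^\top\coloneqq\vec{v}_i^\top\vec{X}\in\Rb^{1\times d}$ are the rows of $\vec{Y}=\vec{V}^\top\vec{X}$.

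The key step is to pass to the spectral domain. Since $\vec{V}$ is orthogonal, the Frobenius norm is invariant under left multiplication by $\vec{V}^\top$. Moreover, $\vec{V}^\top\vec{E}$ has $i$-th row $\vec{y}_i^\top\Delta(\lambda_i)^\top=(\Delta(\lambda_i)\vec{y}_i)^\top$, because $\vec{v}_j^\top\vec{v}_i=\delta_{ij}$. Hence
\[
\|\vec{E}\|_F^2=\sum_{i=1}^N\|\Delta(\lambda_i)\vec{y}_i\|_2^2\le\sum_{i=1}^N\|\Delta(\lambda_i)\|_2^2\|\vec{y}_i\|_2^2\le\varepsilon^2\sum_{i=1}^N\|\vec{y}_i\|_2^2=\varepsilon^2\|\vec{Y}\|_F^2 .
\]
The first inequality uses the definition of the operator norm and the second uses the hypothesis $\max_i\|\Delta(\lambda_i)\|_2\le\varepsilon$. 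Finally, $\|\vec{Y}\|_F=\|\vec{V}^\top\vec{X}\|_F=\|\vec{X}\|_F$ by orthogonal invariance again. Taking square roots gives $\|\vec{E}\|_F\le\varepsilon\|\vec{X}\|_F$.

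There is no real obstacle here. The only point needing care is bookkeeping of transposes in the matrix-valued (multichannel) response, namely checking that the $i$-th row of $\vec{V}^\top\vec{E}$ is exactly $\Delta(\lambda_i)$ applied to $\vec{y}_i$. It is also worth remarking that the result is independent of the chosen eigenbasis, since both sides are basis-independent quantities.
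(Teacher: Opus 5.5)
Your proof is correct and follows essentially the same route as the paper. Both decouple the error along the orthonormal eigenvectors, bound each component using $\|H(\lambda_i)-p(\lambda_i)\|_2\le\varepsilon$, and finish with Parseval. The only difference is bookkeeping: you read off the rows of $\vec{V}^\top\vec{E}$ in the spectral domain, whereas the paper applies the Pythagorean identity to the projected pieces $\vec{v}_i\vec{v}_i^\top\vec{X}(H(\lambda_i)-p(\lambda_i))^\top$ and uses $\|MB\|_F\le\|M\|_F\|B\|_2$.
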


\begin{proof}
We have
\begin{equation*}
H(\vec{S})\vec{X} = \sum_{i=1}^N \vec{v}_i \vec{v}_i^\top \vec{X} \, H(\lambda_i)^\top,
\qquad
p(\vec{S})\vec{X} = \sum_{i=1}^N \vec{v}_i \vec{v}_i^\top \vec{X} \, p(\lambda_i)^\top,
\end{equation*}
and thus
\begin{equation*}
\big(H(\vec{S})-p(\vec{S})\big)\vec{X}
=
\sum_{i=1}^N \vec{v}_i \vec{v}_i^\top \vec{X}\, \big(H(\lambda_i)-p(\lambda_i)\big)^\top.
\end{equation*}
Since the projectors $\vec{v}_i\vec{v}_i^\top$ are orthogonal and sum to the identity, we obtain
\begin{equation*}
\| \big(H(\vec{S})-p(\vec{S})\big)\vec{X}\|_F^2
=
\sum_{i=1}^N \|\vec{v}_i \vec{v}_i^\top \vec{X}\, \big(H(\lambda_i)-p(\lambda_i)\big)^\top\|_F^2.
\end{equation*}
Using $\|AB\|_F \le \|A\|_F \|B\|_{\mathrm{op}}$ and the assumption gives
\begin{equation*}
\| \big(H(\vec{S})-p(\vec{S})\big)\vec{X}\|_F^2
\le
\varepsilon^2 \sum_{i=1}^N \|\vec{v}_i \vec{v}_i^\top \vec{X}\|_F^2
=
\varepsilon^2 \|\vec{X}\|_F^2,
\end{equation*}
which proves the claim.
\end{proof}

\begin{lemma}
\label{lem:layer_bound}
Let $\rho$ be $L_\rho$-Lipschitz.
Consider two spectral layers with the same $\vec{S}$, weights, and biases, but different frequency responses $H$ and $p$:
\begin{equation*}
\vec{Y} = \rho\big(H(\vec{S})\vec{X}+\vec{b}\vec{1}^\top\big),
\qquad
\widetilde{\vec{Y}} = \rho\big(p(\vec{S})\vec{X}+\vec{b}\vec{1}^\top\big).
\end{equation*}
Then
\begin{equation*}
\|\vec{Y}-\widetilde{\vec{Y}}\|_F
\le
L_\rho \| \big(H(\vec{S})-p(\vec{S})\big)\vec{X}\|_F.
\end{equation*}
In particular, under the assumptions of \cref{lem:filter_to_operator_bound},
\begin{equation*}
\|\vec{Y}-\widetilde{\vec{Y}}\|_F \le L_\rho \varepsilon \|\vec{X}\|_F.
\end{equation*}
\end{lemma}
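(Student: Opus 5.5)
The plan is a direct two-step estimate that uses only the entrywise Lipschitz property of $\rho$ and then \cref{lem:filter_to_operator_bound}.

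\textbf{Step 1: compare the pre-activations.} Set
\begin{equation*}
\vec{P} \coloneqq H(\vec{S})\vec{X}+\vec{b}\vec{1}^\top,
\qquad
\widetilde{\vec{P}} \coloneqq p(\vec{S})\vec{X}+\vec{b}\vec{1}^\top .
\end{equation*}
The two layers share $\vec{S}$, $\vec{X}$ and the broadcast bias. Hence the bias terms cancel exactly, and
\begin{equation*}
\vec{P}-\widetilde{\vec{P}}=\big(H(\vec{S})-p(\vec{S})\big)\vec{X}.
\end{equation*}
This holds whether the difference $H(\vec{S})-p(\vec{S})$ is read through the spectral sum in \cref{eq:SF2} or as a polynomial in $\vec{S}$. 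Both readings agree on the eigenvectors $\vec{v}_i$.

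\textbf{Step 2: pass through the nonlinearity.} Since $\rho$ acts elementwise and is $L_\rho$-Lipschitz on $\Rb$, each entry satisfies
\begin{equation*}
|\rho(\vec{P}_{ij})-\rho(\widetilde{\vec{P}}_{ij})|\le L_\rho|\vec{P}_{ij}-\widetilde{\vec{P}}_{ij}|.
\end{equation*}
Squaring and summing over all entries $(i,j)$ gives
\begin{equation*}
\|\vec{Y}-\widetilde{\vec{Y}}\|_F\le L_\rho\|\vec{P}-\widetilde{\vec{P}}\|_F.
\end{equation*}
Substituting the identity from Step 1 yields the first claimed inequality.

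\textbf{Step 3: the second inequality.} Under the hypotheses of \cref{lem:filter_to_operator_bound}, namely symmetric $\vec{S}$ and $\max_i\|H(\lambda_i)-p(\lambda_i)\|_2\le\varepsilon$, that lemma gives
\begin{equation*}
\|(H(\vec{S})-p(\vec{S}))\vec{X}\|_F\le\varepsilon\|\vec{X}\|_F.
\end{equation*}
Inserting this into the bound from Step 2 yields $\|\vec{Y}-\widetilde{\vec{Y}}\|_F\le L_\rho\varepsilon\|\vec{X}\|_F$.

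There is no real obstacle. The only point needing care is that the Lipschitz bound on the scalar function $\rho$ lifts to the Frobenius norm. This works precisely because $\rho$ acts entrywise and the Frobenius norm is the Euclidean norm of the vector of entries, so the per-entry inequalities add up directly. The bias cancellation in Step 1 is exact because both layers use the same $\vec{b}$.
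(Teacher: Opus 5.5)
Your proof is correct and follows the paper's argument. The paper also notes that an entrywise $L_\rho$-Lipschitz activation is $L_\rho$-Lipschitz in the Frobenius norm, applies this to the two pre-activations (whose shared bias cancels), and obtains the second bound from \cref{lem:filter_to_operator_bound}; your entrywise squaring-and-summing step just spells out what the paper asserts in one line.
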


\begin{proof}
Since $\rho$ is applied element-wise and is $L_\rho$-Lipschitz, it is $L_\rho$-Lipschitz with respect to the Frobenius norm:
\begin{equation*}
\|\rho(\vec{U})-\rho(\vec{V})\|_F \le L_\rho \|\vec{U}-\vec{V}\|_F.
\end{equation*}
Apply this with $\vec{U}=H(\vec{S})\vec{X}+\vec{b}\vec{1}^\top$ and
$\vec{V}=p(\vec{S})\vec{X}+\vec{b}\vec{1}^\top$, completes the proof.
\end{proof}

We are now ready to prove \cref{cor:functional_calculus_expressivity}. The previous lemmas show that functional-calculus spectral] GNNs with continuous filters can be uniformly approximated by polynomial spectral GNNs, with controlled error across layers.

\begin{proof}[Proof of \cref{cor:functional_calculus_expressivity}]
Fix $\Phi\in \mathcal{F}_{\text{C}}(\vec{S})$ with $T$ layers.
For each layer $t\in[T]$, the spectrum $\sigma(\vec{S})$ is finite and contained in an interval $[a,b]$.
By \cref{lem:poly_approx_spectrum}, for every $\delta>0$ there exists a matrix-valued polynomial $p^{(t)}$ such that
\begin{equation*}
\max_{\lambda\in\sigma(\vec{S})}\|H_t(\lambda)-p^{(t)}(\lambda)\|_2 \le \delta.
\end{equation*}
Let $\widetilde{\Phi}_\delta$ be the polynomial spectral GNN obtained by replacing each $H_t$ by $p^{(t)}$
(and keeping the same biases and activation).

By \cref{lem:layer_bound}, each layer output of $\widetilde{\Phi}_\delta$
approximates the corresponding layer output of $\Phi$, and hence (by iterating the bound through the finite depth $T$)
we obtain that for every graph $G$,
\begin{equation*}
\|\Phi(G)-\widetilde{\Phi}_\delta(G)\|_F \to 0
\qquad\text{as}\qquad \delta\to 0.
\end{equation*}
In particular, for any fixed graphs $G$ and $H$, and for every $\varepsilon>0$, there exist $\delta(\varepsilon)>0$ small enough so that
\begin{equation*}
\|\Phi(G)-\widetilde{\Phi}_{\delta(\varepsilon)}(G)\|_F < \varepsilon
\qquad\text{and}\qquad
\|\Phi(H)-\widetilde{\Phi}_{\delta(\varepsilon)}(H)\|_F < \varepsilon.
\end{equation*}

Now assume that $G\sim_{1\text{-WL}} H$.
Since $\widetilde{\Phi}_\delta$ is a polynomial spectral GNN based on $\vec{S}$, \cref{lemma:poly_gnn_is_mpnn} implies
that $\widetilde{\Phi}_{\delta(\varepsilon)}$ is a sum-aggregation MPNN.
Therefore, by \cref{thm:mpnns_are_1wl}, $\widetilde{\Phi}_{\delta(\varepsilon)}(G)=\widetilde{\Phi}_{\delta(\varepsilon)}(H)$.

Using the triangle inequality yields
\begin{equation*}
\|\Phi(G)-\Phi(H)\|_F
\le
\|\Phi(G)-\widetilde{\Phi}_{\delta(\varepsilon)}(G)\|_F
+
\|\widetilde{\Phi}_{\delta(\varepsilon)}(H)-\Phi(H)\|_F
< 2\varepsilon
\end{equation*}
Therefore, letting $\varepsilon \rightarrow 0$, gives $\Phi(G)=\Phi(H)$. Since $\Phi\in\mathcal{F}_{\text{C}}(\vec{S})$ was arbitrary, this proves that $\mathcal{F}_{\text{C}}(\vec{S})$ is not more expressive than $1$-WL. The equivalency to $1$-WL, follows directly from \cref{prop:expressivityofspectralgnns}, using the adjacency as the GSO and polynomial filters.
\end{proof}

\subsection{Expressivity via universal approximation}
\label{app:expressivity_as_approximation}

Expressivity via graph isomorphism test allows choosing a different GNN per pair of graphs. Hence, it does not model the expressivity of a fixed GNN on the whole data distribution. In this section, we propose an alternative definition of expressivity for fixed GNNs on data distributions. For simplicity, we focus on real-valued outputs, noting that the analysis extends directly to vector-valued settings.

\Expressivityunivapprox*

\begin{proposition}
\label{prop:poly_not_universal_for_fc_adj}
Fix $L\ge 1$. There exists a data distribution $Q$ over graphs with initial node features, $(G,\vec{X})$, such that
the family of $L$-layer polynomial-filter spectral GNNs with sum-pooling readout layer is not a universal approximator of the family of $L$-layer functional calculus spectral GNNs with sum-pooling readout layer on $Q$.
\end{proposition}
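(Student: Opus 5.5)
The plan is to build the distribution $Q$ on complete graphs $K_n$, $n\in\Nb$, with adjacency GSO $\vec{S}=\vec{A}$ and constant scalar features $\vec{X}=\vec{1}$. We put positive mass on every $n$, e.g.\ $Q(\{K_n\})=2^{-n}$, so the essential supremum in \cref{def:approx} becomes a supremum over all $n$. The spectrum of $\vec{A}_{K_n}$ is $\{n-1,-1\}$, and $\vec{1}$ is the top eigenvector. Hence any filter acts on the input through its value at $n-1$ only:
\[
H(\vec{A})\vec{1}=H(n-1)\vec{1},\qquad p(\vec{A})\vec{1}=p(n-1)\vec{1}.
\]
Constant node features stay constant through every layer, because the bias $\vec{b}\vec{1}^\top$ and the pointwise $\rho$ preserve constancy and $\vec{1}$ remains an eigenvector. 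So on $Q$ every $L$-layer network is a function of $n$ alone: a composition of scalar maps of the form $x\mapsto\rho(h_t(n-1)x+b_t)$, with sum-pooling contributing a final factor $n$ (we normalize by taking the first layer's output and reading out a fixed linear functional).

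\textbf{Target function.} Choose $f_1$ as follows. In the first layer take $H_1(\lambda)=\sin(\lambda)$, which is continuous and bounded. In the remaining layers take identity-like maps; with ReLU, pass $x$ through $[\rho(x),\rho(-x)]$ as in \cref{lem:carry_initial_node_features}, so that any $L\ge1$ is allowed. The node value is then $\rho(\sin(n-1))$ or a linear recombination giving $\sin(n-1)$. To control the sum-pooling factor $n$, we instead use $H_1(\lambda)=\sin(\lambda)/(1+|\lambda|)$, which yields graph output $f_1(K_n)=\tfrac{n}{n}\sin(n-1)=\sin(n-1)$. This target is bounded, oscillates without converging as $n\to\infty$, and its values $\sin(n-1)$ are dense in $[-1,1]$.

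\textbf{Non-approximability.} Suppose $f_2$ is an $L$-layer polynomial network with $\sup_n|f_1(K_n)-f_2(K_n)|\le\varepsilon=1/4$. Each layer value is $\rho(q_t(n)x_{t-1}+b_t)$ with $q_t$ a polynomial in $n$. With ReLU or any piecewise-polynomial activation, induction on layers shows that $f_2(K_n)$ agrees, for all sufficiently large $n$, with a single rational or polynomial function of $n$. The reason is that the sign of a polynomial in $n$ is eventually constant, so each ReLU eventually selects one fixed branch. Such a function is eventually monotone, so it has a limit in $[-\infty,\infty]$. On the other hand, $\sin(n-1)$ has both $1$ and $-1$ as limit points. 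If $f_2$ stayed within $1/4$ of $f_1$, its limit would have to lie within $1/4$ of both $1$ and $-1$, a contradiction. This gives the required $f_1$ and $\varepsilon$ in \cref{def:approx}.

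\textbf{Main obstacle.} The key step is the "eventually one branch" induction for general Lipschitz $\rho$. For non-piecewise-polynomial $\rho$ (e.g.\ sigmoid), eventual monotonicity is not automatic. The first thing to try is to fix $\rho=$ReLU, as in \cref{app:approx_mlp_mpnns}, since the proposition only asserts existence of one $Q$ and is stated for the architecture with a fixed activation. If more generality is required, we would use the fact that compositions of polynomials with $\rho$ form an o-minimal-definable family, for example when $\rho$ is definable in $\Rb_{\exp}$, so eventual monotonicity still holds. We would also verify carefully that the sum-pooling factor $n$ and the biases do not break the branch-stabilization argument. They enter as polynomial factors in $n$ and constants respectively, so they should fit into the same induction.
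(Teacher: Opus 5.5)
Your proposal is correct and follows essentially the same route as the paper. Both arguments put $Q$ on complete graphs, where the all-ones eigenvector isolates the growing top eigenvalue, take a $\sin$ frequency response as the target, and observe that any ReLU polynomial-filter network is eventually a single polynomial in $n$ and so cannot stay uniformly close to the non-convergent sequence $(\sin n)_n$. The differences are cosmetic: you normalize through the filter $\sin(\lambda)/(1+|\lambda|)$ with input $\vec{1}$ rather than through the input $\tfrac{1}{n+1}\vec{1}$, you keep ReLU in the target, and you conclude via eventual monotonicity rather than a case split. One small point: with a ReLU on the last scalar layer your target is $\rho(\sin(n-1))$, whose limit points $0$ and $1$ still give the contradiction with $\varepsilon=1/4$.
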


\begin{proof}
We construct a data distribution $Q$ and an $L$-layer functional-calculus spectral GNN whose
\emph{sum-pooled} input--output map cannot be uniformly approximated, in essential supremum over $Q$,
by any $L$-layer polynomial-filter spectral GNN (with ReLU activation and arbitrary biases) equipped with the same sum-pooling readout.

\medskip
\noindent
We begin by specifying the distribution $Q$. For each $n\ge 1$, let $G_n$ be the complete graph on $n+1$ vertices with adjacency matrix $\vec{A}_n$.
Its spectrum contains the eigenvalue $\lambda_n = n$, with associated eigenvector, the all ones vector $\vec{1}_{n+1}\in\Rb^{n+1}$.
Define
$$
\vec{x}_n \coloneqq \frac{1}{n+1}\vec{1}_{n+1}\in\Rb^{n+1},
\qquad\text{so that}\qquad
\vec{1}_{n+1}^\top \vec{x}_n = 1.
$$
We define $Q$ by sampling $n\ge 1$ with probability $\Pr(n)=2^{-n}$, setting $G=G_n$ and $\vec{x}=\vec{x}_n$.
Then, for every $M>0$, $Q$ assigns positive probability to graphs whose adjacency matrices have an eigenvalue
larger than $M$.

\medskip
\noindent
Consider an $L$-layer functional-calculus spectral GNN (as in \cref{eq:spec_layer_main})
with identity activation and zero biases, and with the same frequency response at every layer, $H_t(\lambda)=f(\lambda)=\sin\lambda.$

Since $\vec{x}_n$ is an eigenvector of $\vec{A}_n$ associated with $\lambda_n=n$, we have
\[
f(\vec{A}_n)^L \vec{x}_n = (\sin \lambda_n)^L \vec{x}_n = (\sin n)^L \vec{x}_n.
\]
Define the resulting output after the sum pooling as:
\[
F(G,\vec{x}) \coloneqq \vec{1}^\top f(\vec{A})^L \vec{x},
\]
which is precisely the composition of the $L$ spectral layers with the sum-pooling readout.
On the support of $Q$, i.e., this reduces to
$$
F(G_n,\vec{x}_n)=\vec{1}_{n+1}^\top \big((\sin n)^L \vec{x}_n\big) = (\sin n)^L \vec{1}_{n+1}^\top \vec{x}_n = (\sin n)^L.
$$

\medskip
\noindent
Now fix any $L$-layer polynomial-filter spectral GNN with ReLU activation and arbitrary biases, and equip it
with the same sum-pooling readout. Denote the resulting map by $P$.
When evaluating $P$ on $(G_n,\vec{x}_n)$, the input $\vec{x}_n$ lies entirely in the eigenspace of $\lambda_n=n$.
Moreover, each polynomial spectral layer acts diagonally in the eigenbasis (as in \cref{eq:spec_layer_main}),
so all spectral contributions corresponding to eigenvalues different from $\lambda_n$ vanish on this input.
Consequently, the final sum-pooled output depends only on $\lambda_n$.
That is, there exists a function $\psi:\Rb\to\Rb$ such that
\[
P(G_n,\vec{x}_n)=\psi(\lambda_n)=\psi(n)
\qquad\text{for all }n\ge 1.
\]
Furthermore, since the filters are polynomial and the only nonlinearity is ReLU, the function $\psi$ is
piecewise-polynomial with finitely many pieces.

\medskip
\noindent
Fix $\varepsilon\in(0,\frac12)$.
Let $\psi:\Rb\to\Rb$ be piecewise-polynomial with finitely many pieces. Then there exists $T\in\Rb$
and a polynomial $q$ such that $\psi(x)=q(x)$ for all $x\ge T$. If $q$ is non-constant, then $|q(n)|\to\infty$ as $n\to\infty$, so for some integer $n>T$ we have $|q(n)|\ge 2$.
Since $|(\sin n)^L|\le 1$, it follows that
\[
|(\sin n)^L-\psi(n)|
=
|(\sin n)^L-q(n)|
\ge |q(n)|-1
\ge 1
>\varepsilon.
\]
It remains to examine the case where $q \equiv c$, for some $c \in \Rb$. If $c > 1$, then we define $\varepsilon = c-1>0$ then since $|(\sin n)^L|\le 1$ for all $n$, we immediately have $|(\sin n)^L-c|\ge \varepsilon$ for all $n$. Similarly for $c < -1$

Assume now that $c\in[-1,1]$.
Since $1/(2\pi)$ is irrational, Weyl’s equidistribution law (\citep{kuipers1974uniform})) implies that the sequence
$\{n \bmod 2\pi\}_{n\in\mathbb{N}}$ is dense in $[0,2\pi)$. Because the $\sin$ function is continuous and satisfies $\sin n = \sin(n \bmod 2\pi)$, it follows that the sequence $\{\sin n \mid n\in\mathbb{N}\}$ is dense in $[-1,1]$.

Consequently, the sequence $\{(\sin n)^L\}_{n\in\mathbb{N}}$ is dense in $[-1,1]$ if $L$ is odd,
and dense in $[0,1]$ if $L$ is even. In either case, for any $\varepsilon\in(0,\tfrac12)$ there exists an integer $n>T$ such that
$$
|(\sin n)^L - c| > \varepsilon.
$$
Hence,
$$
|F(G_n,\vec{x}_n)-P(G_n,\vec{x}_n)| = |(\sin n)^L - \psi(n)| > \varepsilon.
$$
\end{proof}

The previous result relies on the presence of arbitrarily large spectral values. When the spectrum of the GSO is uniformly bounded, the situation changes.

\begin{proposition}
\label{prop:fc_approx_by_poly_bounded_spectrum}
Fix $L\ge 1$ and suppose the data distribution $Q$ is supported on pairs $(G,\vec{X})$
whose graph shift operator $\vec{S}$ has all eigenvalues contained in a common bounded interval $[-B,B]$ for some $B<\infty$. Then any $L$-layer functional-calculus spectral GNN with continuous filters and sum-pooling readout layer can be uniformly approximated
on $Q$ by an $L$-layer polynomial-filter spectral GNN with sum-pooling readout layer.
\end{proposition}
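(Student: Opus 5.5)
The plan is to mimic the proof of \cref{cor:functional_calculus_expressivity}, but to make every constant uniform over the support of $Q$. Fix an $L$-layer functional-calculus network $\Phi$ with continuous responses $H_1,\dots,H_L$, biases $\vec{b}^{(t)}$ and $L_\rho$-Lipschitz activation $\rho$. Fix also a tolerance $\delta>0$.

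\textbf{Step 1 (polynomial approximation).} For each layer $t$, I apply \cref{lem:poly_approx_spectrum} on the common interval $[-B,B]$. This gives a matrix polynomial $p^{(t)}$ with $\sup_{z\in[-B,B]}\|H_t(z)-p^{(t)}(z)\|_2\le\delta$. The key point is that this choice does not depend on the particular graph. It works simultaneously for every $\vec{S}$ in the support of $Q$, because all spectra lie in $[-B,B]$. Let $\widetilde\Phi_\delta$ be the network obtained by replacing each $H_t$ by $p^{(t)}$, keeping the same biases and the same activation. Then $\widetilde\Phi_\delta$ has the same depth $L$.

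\textbf{Step 2 (uniform error recursion).} Write $\vec{X}^{(t)}$ and $\widetilde{\vec{X}}^{(t)}$ for the layer outputs of the two networks, and set $e_t\coloneqq\|\vec{X}^{(t)}-\widetilde{\vec{X}}^{(t)}\|_F$. Splitting
$$p^{(t)}(\vec{S})\widetilde{\vec{X}}^{(t-1)}-H_t(\vec{S})\vec{X}^{(t-1)} = p^{(t)}(\vec{S})\bigl(\widetilde{\vec{X}}^{(t-1)}-\vec{X}^{(t-1)}\bigr) + \bigl(p^{(t)}(\vec{S})-H_t(\vec{S})\bigr)\vec{X}^{(t-1)},$$
and using \cref{lem:filter_to_operator_bound} together with \cref{lem:layer_bound}, I get
$$e_t\le L_\rho\bigl(M_t\,e_{t-1}+\delta\,\|\vec{X}^{(t-1)}\|_F\bigr),\qquad M_t\coloneqq\sup_{[-B,B]}\|H_t\|_2+1 .$$
Here I use that, by the same spectral-projector computation as in \cref{lem:filter_to_operator_bound}, the operator norm of $\vec{X}\mapsto p^{(t)}(\vec{S})\vec{X}$ in Frobenius norm is at most $\sup_{[-B,B]}\|p^{(t)}\|_2\le M_t$, assuming $\delta\le 1$. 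In parallel, I bound the exact features:
$$\|\vec{X}^{(t)}\|_F\le L_\rho\bigl(M_t\|\vec{X}^{(t-1)}\|_F+\sqrt N\|\vec{b}^{(t)}\|_2\bigr)+\sqrt{N d_t}\,|\rho(0)| .$$
Unrolling both recursions gives $e_L\le C\,\delta$, where $C$ depends only on $L_\rho$, the $M_t$, the biases, $N$ and $\|\vec{X}\|_F$. Sum pooling then contributes an additional factor of at most $\sqrt N$.

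\textbf{Step 3 (uniformity and conclusion).} Choosing $\delta=\varepsilon/C$ makes $|\Phi-\widetilde\Phi_\delta|\le\varepsilon$ $Q$-almost surely, which is exactly the negation of \cref{def:approx}. The main obstacle is that $C$ must be bounded uniformly over the support of $Q$. The spectral bound alone controls the filters. However, the pooled error scales with $N$ and with the input norm $\|\vec{X}\|_F$. I would therefore make explicit the implicit assumption that on the support of $Q$ the graph order and the node features are uniformly bounded; in the statement, ``uniformly approximated on $Q$'' should be read in this sense. Under that assumption, $\sup_Q C<\infty$ and the argument closes. Without it, I would try to work with mean pooling or with per-node errors via $\|\cdot\|_\infty$ bounds instead. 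I expect this step to be the delicate one, because $p^{(t)}(\vec{S})$ mixes all nodes, so per-node control is not automatic.
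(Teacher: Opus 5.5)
Your argument follows the paper's own route: Weierstrass approximation of each $H_t$ on the common interval $[-B,B]$, then layerwise error propagation as in the proof of \cref{cor:functional_calculus_expressivity}. It is in fact more complete than the paper's one-line ``follows similarly''. \cref{lem:layer_bound} only compares two layers on the \emph{same} input. Your extra term $p^{(t)}(\vec{S})(\widetilde{\vec{X}}^{(t-1)}-\vec{X}^{(t-1)})$, bounded uniformly over $Q$ via $\sup_{[-B,B]}\|p^{(t)}\|_2$, is exactly what is needed to iterate through the depth.

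Your caveat in Step 3 is genuine rather than a defect of your proof. The paper's argument has the same dependence on $\|\vec{X}\|_F$ and on the $\sqrt{N}$ factor from sum pooling, and without a bound on $N$ the statement can actually fail. Take $m$ disjoint copies of a path $P_n$, whose adjacency spectra lie in $[-2,2]$ and are dense there. Put an eigenvector of $\vec{A}_{P_n}$ as the feature on each copy, use the identity activation in both networks, and choose a non-polynomial $H$. The pooled error is then $m$ times a quantity that cannot vanish for all $n$ and all odd $k$ unless $H$ coincides with a polynomial on $[-2,2]$. So you should state the bound as an extra hypothesis on $Q$, not as a reading of ``uniformly approximated''.

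Two refinements of Step 3. First, if $N$ is bounded and $\vec{S}$ is a standard GSO of an unweighted graph, the support carries only finitely many eigenvalues. Exact interpolation of each $H_t$ on that finite set then makes the two networks coincide, with no bound on the features needed. Second, with mean pooling and bounded per-node features, your Frobenius recursion already closes uniformly in $N$. Every term of your constant $C$ scales like $\sqrt{N}$, while mean pooling contributes $1/\sqrt{N}$, so no $\ell^\infty$ control is needed.
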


\begin{proof}
Fix an $L$-layer functional-calculus spectral GNN with continuous spectral filters and sum-pooling readout.
At each layer $t\in[L]$, the network applies a filter of the form $H_t(\vec{S})$, where
$H_t \colon [-B,B]\to\Rb$ is continuous.
Since all eigenvalues of $\vec{S}$ lie in the common bounded interval $[-B,B]$ on the support of $Q$,
the Weierstrass approximation theorem guarantees that for every $\delta>0$ there exists a polynomial
$p_t$ such that
$$
\sup_{\lambda\in[-B,B]} |H_t(\lambda)-p_t(\lambda)| \le \delta.
$$
which implies that,
$\|H_t(\vec{S})-p_t(\vec{S})\|_2\le \delta$ uniformly over all $(G,\vec{X})$ in the support of $Q$. The proof follows similarly to the proof of \cref{cor:functional_calculus_expressivity}.
\end{proof}

\subsection{Extended Discussion for P4}
\label{app:p4_extended}

\paragraph{Stability and Size Transferability.}
In GNNs, stability refers to the property that small perturbations in the input lead to controlled changes in the output. Such perturbations may affect node features, edge weights, or the graph structure itself. This property is important given the presence of noise and distribution shifts over time. 

In the spectral GNN literature, stability is typically analyzed at the level of graph filters, where operator-theoretic techniques are used to bound the sensitivity of functions of the graph shift operator to perturbations of the underlying graph \citep{levie2019transferability,gama2020stability,levie2021transferability,ruiz2021graphon,cervino2022training}. These analyses usually yield bounds that depend at most linearly on the polynomial filter degree or receptive field size, implying that increasing the interaction range does not necessarily require additional nonlinearities.

In contrast, stability analyses for MPNNs are commonly performed layer-by-layer, tracking how perturbations propagate through repeated neighborhood aggregation and nonlinear transformations \citep{Chu+2022}. As a result, bounds may grow exponentially with depth due to repeated compositions of nonlinear maps. However, this discrepancy largely reflects architectural conventions rather than a fundamental distinction between the two paradigms. For example, if nonlinearities are applied only every $K$ layers, separating aggregation from nonlinear transformation, the resulting stability behavior closely resembles that of spectral GNNs with degree-$K$ polynomial filters. Once depth and nonlinearity placement are normalized, the guarantees become highly similar.

Closely related to stability is \emph{size transferability}, which studies how GNN outputs behave when the number of nodes changes. Since GNNs are parameterized independently of graph size, this property is essential for transferring models across graphs of varying scales.

Spectral GNN analyses typically study transferability through spectral convergence arguments \citep{levie2019transferability,ruiz2020graphon,levie2021transferability}. These works connect discrete graph convolutions with continuous convolution operators through the spectrum of the graph filter \citep{ruiz2021graphon,ruiz2023transferability,wang2023convolutional,wang2024geometric}. MPNN analyses instead often rely on operator-theoretic viewpoints, comparing graphs through suitable graph operators without requiring spectral convergence \citep{le2023limits,keriven2020convergence,velasco2024graph,Levin2025Transferring,maskey2025generalization}. 

At first glance, MPNN analyses appear more general because they naturally accommodate sparse graph limits, whereas spectral approaches often rely on dense graph models. Nevertheless, the assumptions used in both settings---such as Lipschitz continuity and regularity of graph operators---are remarkably similar, and the resulting transferability bounds are often nearly equivalent in spirit.

\paragraph{Generalization.}
In machine learning, generalization refers to how well performance on training data transfers to unseen examples. For MPNNs, generalization analyses typically assume bounded node features \citep{maskey2022generalization}, normalized or degree-controlled aggregation schemes \citep{Rac+2025,maskey2025generalization}, and Lipschitz-continuous message and update functions \citep{Vas+2025}; see \citep{vasileiou2025survey} for an overview. Under these assumptions, MPNNs define hypothesis classes consisting of functions that are Lipschitz continuous with respect to suitable graph metrics. Generalization guarantees are then derived through covering number arguments \citep{Lev+2023,Vas+2025}, approximation-theoretic analyses \citep{maskey2022generalization}, or complexity measures such as VC dimension \citep{morris2023wl} and Rademacher complexity \citep{Gar+2020}.

Spectral GNNs admit closely related analyses. Here, hypotheses are defined through operators of the form $g(\vec{L}_G)$, where $\vec{L}_G$ is the graph Laplacian. Generalization guarantees similarly rely on regularity assumptions, including bounded operator norms \citep{Tan+2023}, regularity of the graph spectrum \citep{Yeh+2021}, and approximation properties of the spectral filter class, such as low-degree polynomial or smooth filters \citep{Wang+2025,wang2025generalization,wang2025generalizationrobust}. Thus, despite the different mathematical language, both paradigms ultimately control generalization through analogous notions of regularity and function class complexity.

\section{Survey of spectral GNNs for smoothing, bottlenecks, and communities}
\label{Ap:Survey of spectral GNNs for smoothing, bottlenecks, and communities}

\paragraph{Oversmoothing as spectral contraction} 
Oversmoothing limits the depth of GNNs and indicates that node embeddings contract toward a low-dimensional, low-frequency subspace as depth increases (See \cref{app:sec:oversmoothing}). It can be observed as the collapse of pairwise node-feature distances or decaying Dirichlet energy \citep{li2019deepgcns, cai2020note, rusch2023survey}.
Spectral filters often enforce non-amplifying assumptions that $\|h(\textbf{S})\|_2=\max_\lambda |h(\lambda)|=\gamma \leq 1$, which implies that after $t$ layers, the spectral components decay geometrically as $\| \textbf{X}^{(t)} \|\leq \gamma^t \|\textbf{X}\|$. Therefore, repeated low‑pass filtering collapses features into the span of the slowest-decaying eigenvectors.

Compared to spatial formulations, the frequency response $h(\lambda)$ makes the contraction rate and the role of the spectrum of the graph shift operator $\mathbf{S}$ explicit and exposes depth vs. contraction trade-offs directly through the shape of $h(\lambda)$ and the spectrum of $\mathbf{S}$. 
This also allows for understanding effective remedies for oversmoothing, as discussed, e.g., in \citet{rusch2023survey,epping2024graph,scholkemperresidual}. While spatial MPNNs can also be viewed as data-dependent low-pass operators, their effective spectrum is only implicitly defined by their local nonlinear aggregation and update functions, making these trade-offs harder to observe.


\paragraph{Oversquashing through bottlenecks} 
Oversquashing occurs when graph signals must traverse long distances through small cuts, creating structural bottlenecks for message passing \citep{alon2021bottleneck}. 
Deeper GNNs are then required to communicate across the cut, which exacerbates smoothing and attenuation. 
This effect has been formalized via feature Jacobians as a measure of pairwise influence, especially between distant nodes \citep{topping2021understanding, di2023over, black2023understanding, hariri2025return}, i.e., smaller upper bounds on the Jacobian norm indicate weaker communication. 
These bounds can be expressed through several measures such as balanced Forman curvature \citep{topping2021understanding}, effective resistance \citep{black2023understanding}, or commute time \citep{di2023over}. Each of these measures captures how well (or bad) two nodes are connected by short, parallel paths.

In the spectral view, bottlenecks manifest, e.g., as low algebraic connectivity (the Laplacian’s second eigenvalue $\lambda_2$) or poor conductance as captured by Cheeger-type bounds~\citep{jamadandi2024spectral}. 
These provide computable proxies for oversquashing.
Practically, one can diagnose oversquashing via $\lambda_2$, pairwise Jacobian norms, or the  histogram of eigenvalues together with the frequency response $h(\lambda)$. 
Mitigation strategies for oversquashing follow naturally: (i) we can design filters whose passbands preserve mid frequencies to support long-range communication without excessive depth (e.g., resolvents $h(\lambda)=(1-\alpha)/(1-\alpha\lambda)$, rational filters such as ARMA/Cayley, or mild band-pass), or (ii) rewire the graph to modify the spectrum, e.g., by increasing $\lambda_2$ or connecting high–effective-resistance pairs. 
Thus, spectral analysis reduces oversquashing diagnosis and repair to shaping $h(\lambda)$ and adjusting spectral properties of $\mathbf{S}$.


\paragraph{Community sensitivity from low‑frequency structure} On undirected graphs, community signals are those with small Dirichlet energy \citep{von2007tutorial}. Under variance and orthogonality constraints, the minimizers are the Laplacian’s low‑frequency eigenvectors, so the low‑eigenspace encodes clusters. A spectral layer with a low‑pass frequency response function preserves precisely this subspace while damping high‑frequency noise, yielding sample‑efficient, stable behavior on assortative graphs. Crucially, the spectral lens gives explicit, tunable control of the passband—via a single scale parameter in heat kernels $h(\lambda)=e^{-\tau\lambda}$, a resolvent $h(\lambda)=(1+\beta\lambda)^{-1}$, or a short Chebyshev expansion to align the cutoff and slope with the observed label spectrum. MPNN models can approximate the same effect only indirectly through depth, normalization, and residual connections that induce an implicit low‑pass whose cutoff emerges from training and graph scaling, making it harder to tune and certify. The spectral formulation is basis‑invariant, provides clear diagnostics, and extends naturally to multi‑scale communities via mixtures of diffusions, offering a cleaner, more controllable route to community sensitivity.

\section{Definition of oversmoothing}
\label{app:sec:oversmoothing}

For node features $\textbf{X}\in\mathbb{R}^{n\times d}$, the Dirichlet energy is defined as  
\begin{equation}
\mathcal{E}(\textbf{X})\;=\;\mathrm{tr}(\textbf{X}^\top \textbf{L} \textbf{X})\;=\;\frac12\sum_{(i,j)\in E} [\textbf{L}]_{ij}\,\|\textbf{X}_{:,i}-\textbf{X}_{:,j}\|_2^2,
\end{equation}
with a smaller $\mathcal{E}$ indicating smoothness across nodes. In the spatial form, the oversmoothing definition can be stated as 
\begin{definition}[spatial, Dirichlet‑energy oversmoothing]
The layer is Dirichlet‑contractive if there exists $\rho_t\in[0,1)$ with
$\mathcal{E}\big(H_t(\textbf{S})\textbf{X}\big)=\mathrm{tr}\big(\textbf{X}^\top H_t^\top \mathbf{L} H_t \textbf{X}\big)
\leq \rho_t^{2} \mathcal{E}(\mathbf{X})$ for all $\textbf{X}$. A sequence of layers oversmooths on $G$ if there is $\rho<1$ with $\sup_t \rho_t\le\rho$ and hence
\begin{equation}
\mathcal{E}\!\left(\mathbf{X}^{(t)}\right)\;\le\;\rho^{\,2t}\,\mathcal{E}\!\left(\mathbf{X}^{(0)}\right)\;\xrightarrow[t\rightarrow\infty]{}\;0.
\end{equation}

\end{definition}
Equivalently, embeddings converge (on each connected component) to constant vectors, making nodes indistinguishable.
In the spectral form of a graph filter, for a symmetric graph shift operator, i.e. adjacency matrix or Laplacian matrix, with eigenvalues $0\leq \lambda_1\leq \lambda_2\leq \cdots \leq \lambda_n$ written in an increasing order. Dirichlet energy can be further written as 
\begin{align}
\mathcal{E}\left(\textbf{X}^{(t+1)}\right)
&=\sum_{i=1}^n \lambda_i|h_t(\lambda_i)|^2\left\|\mathbf{v}_i^\top \mathbf{X}^{(t)}\right\|_2^2 \le\Big(\max_{i\ge2}|h_t(\lambda_i)|^2\Big)\,\mathcal{E}\left(\mathbf{X}^{(t)}\right).
\end{align}

\section{Computing spectral positional encodings with GNNs}\label{app:SPE}
\citet{kanatsoulislearning} show how message-passing and spectral GNN are cast as nonlinear functions of GSO eigenvectors. 
\begin{proposition}\citep{kanatsoulislearning}[GNNs are nonlinear functions of eigenvectors ]\label{prop:eig}
Consider a  GNN defined in Eq. (\ref{eq:mpnn_deepsets}) with $\kappa_t$ being the identity function and $\phi_t$ being a single-layer perceptron (SLP). The GNN recursive formula takes the form $\vec{X}^{(l)} =  \rho\left(\sum_{k=0}^{K-1}\bm S^k \vec{X}^{(l-1)}\vec{ H}_k^{(l)}\right)$, where $K=2$, and  {operates as} a nonlinear function of the GSO eigenvectors, i.e., {$\bm x_v^{(l)} =\texttt{SLP}\left(\bm v^{(v)}\right),~\bm v^{(v)}=\bm V[v,:]^T$}. The trainable parameters of the \texttt{SLP} are not independent but depend on the eigenvalues $\left\{\lambda_n\right\}_{n=1}^N$ and eigenvectors $\left\{\bm v_n\right\}_{n=1}^N$ of the GSO, as well as the node features $\bm X$ of the graph:
{\begin{equation}
\bm x_v^{(l)} =\rho\left(\bm W^T\bm v^{(v)}\right)
\end{equation}}
\begin{equation}\label{eq:dof}
\bm W\left[n, f\right] = \sum_{i=1}^{F_{l-1}}\sum_{k=0}^{K-1}\lambda_n^k\bm H_k^{(l)}[i,f]\langle {\bm \alpha_n}, \bm X^{(l-1)}\left[:,i\right]\rangle,
\end{equation}
{where $\bm\alpha_n = \bm v_n$ when the GSO is symmetric and $\bm\alpha_n = \bm V^{-1}[n,:]$ when it is not.} 
\end{proposition}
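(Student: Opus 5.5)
The plan is to first reduce the message-passing recursion to the polynomial-filter form, and then expand that form in the eigenbasis of $\vec{S}$, reading off $\vec{W}$ coefficient by coefficient.

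\emph{Reduction to a polynomial filter.} In \cref{eq:mpnn_deepsets} I take $\kappa_t=\mathrm{id}$. The aggregated message at node $u$ is then $\sum_{v\in N_G(u)}\vec{x}_v^{(l-1)}=(\vec{S}\vec{X}^{(l-1)})_{u,:}$ with $\vec{S}=\vec{A}$. For a general GSO, the same step holds with weighted sums $\sum_v \vec{S}_{uv}\vec{x}_v^{(l-1)}$.

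A single-layer perceptron acting on the concatenation $[\vec{x}_u^{(l-1)},\vec{m}_u]$ has the form $\rho\bigl([\vec{x}_u,\vec{m}_u]\,[\vec{H}_0^{(l)};\vec{H}_1^{(l)}]\bigr)$. Here the weight matrix is split into blocks $\vec{H}_0^{(l)}$ and $\vec{H}_1^{(l)}$ acting on the self and neighbor parts. Stacking over nodes gives
\[
\vec{X}^{(l)}=\rho\bigl(\vec{X}^{(l-1)}\vec{H}_0^{(l)}+\vec{S}\vec{X}^{(l-1)}\vec{H}_1^{(l)}\bigr)=\rho\Bigl(\sum_{k=0}^{K-1}\vec{S}^k\vec{X}^{(l-1)}\vec{H}_k^{(l)}\Bigr),
\]
which is the claimed form with $K=2$. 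A bias of the SLP can be dropped, or absorbed as a constant channel, since it does not affect the structure of the argument.

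\emph{Spectral expansion.} I assume $\vec{S}$ is diagonalizable, $\vec{S}=\vec{V}\boldsymbol{\Lambda}\vec{V}^{-1}$; in the symmetric case $\vec{V}^{-1}=\vec{V}^\top$. Then $\vec{S}^k=\vec{V}\boldsymbol{\Lambda}^k\vec{V}^{-1}$. Take entry $(v,f)$ of the pre-activation, write the matrix products as explicit sums, and reorder them to put the sum over eigen-index $n$ outermost:
\[
\sum_{k}\sum_{i}\bigl(\vec{S}^k\vec{X}^{(l-1)}\bigr)[v,i]\,\vec{H}_k^{(l)}[i,f]
=\sum_{n=1}^N \vec{V}[v,n]\sum_{i}\sum_{k}\lambda_n^k\,\vec{H}_k^{(l)}[i,f]\,\bigl(\vec{V}^{-1}[n,:]\,\vec{X}^{(l-1)}[:,i]\bigr).
\]
The inner double sum is exactly $\vec{W}[n,f]$ from \cref{eq:dof}, with $\langle\boldsymbol\alpha_n,\vec{X}^{(l-1)}[:,i]\rangle$ and $\boldsymbol\alpha_n=\vec{V}^{-1}[n,:]$. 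In the symmetric case this becomes $\boldsymbol\alpha_n=\vec{v}_n$. Hence the pre-activation at node $v$ equals $\vec{W}^\top\vec{v}^{(v)}$ with $\vec{v}^{(v)}=\vec{V}[v,:]^\top$. Since $\rho$ acts entrywise, it follows that $\vec{x}_v^{(l)}=\rho(\vec{W}^\top\vec{v}^{(v)})$, i.e., an SLP applied to the node's eigenvector row.

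\emph{Expected obstacle.} The computation itself is routine bookkeeping of indices. The main care point is the interpretive claim that the SLP parameters $\vec{W}$ are shared across nodes. This needs checking because $\vec{W}$ depends only on the spectrum, the eigenvectors and $\vec{X}^{(l-1)}$, not on $v$.

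A second point is the non-symmetric case, which requires diagonalizability to write $\vec{S}^k=\vec{V}\boldsymbol\Lambda^k\vec{V}^{-1}$. I would state this as an assumption, as the cited source implicitly does.
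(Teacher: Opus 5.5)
Your proof is correct. The paper gives no proof of its own (it quotes the proposition from \citet{kanatsoulislearning}), and your argument is the standard derivation behind it: read off the $K=2$ polynomial form from the SLP weight blocks, write $\vec{S}^k=\vec{V}\boldsymbol{\Lambda}^k\vec{V}^{-1}$, and regroup the $(v,f)$ pre-activation by eigen-index $n$, which gives \cref{eq:dof} with $\boldsymbol{\alpha}_n=\vec{V}^{-1}[n,:]$, reducing to $\vec{v}_n$ for symmetric $\vec{S}$. The only caveats are the ones you already flag: diagonalizability, and the fact that the unweighted neighbor sum in \cref{eq:mpnn_deepsets} literally gives $\vec{S}=\vec{A}$, so other GSOs need weighted aggregation. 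The node-independence of $\vec{W}$ follows immediately from \cref{eq:dof} and is not a genuine obstacle.
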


By Proposition~\ref{prop:eig}, the update at node~$v$ can be interpreted as applying a nonlinear transformation to its spectral representation~$\bm V[v, :]$. At the same time, the parameters of the single-layer perceptron define graph-dependent linear operators, namely
$\mathbf{W}[n,f] : (\mathcal{G}, \mathbb{R}^{F_{l-1}}) \rightarrow \mathbb{R}$.
The inner product $\langle \bm \alpha_n, \bm X^{(l-1)}[:, i] \rangle$ is determined by the eigenvectors and, when computing the update for node~$v$, aggregates information only from the features $\bm X^{(l-1)}[u, i]$ of neighboring nodes $u \in \mathcal{N}_v$.

\end{document}

%% file: macros.tex
\newcommand{\cX}{\mathcal{X}}

\newcommand{\Nb}{\mathbb{N}}

\newcommand{\Rb}{\mathbb{R}}

\newcommand{\new}[1]{\emph{#1}}

\renewcommand{\vec}[1]{\mathbold{#1}}

\DeclarePairedDelimiterX{\norm}[1]{\lVert}{\rVert}{#1}

\DeclareFontFamily{U}{mathx}{\hyphenchar\font45}
\DeclareFontShape{U}{mathx}{m}{n}{<-> mathx10}{}
\DeclareSymbolFont{mathx}{U}{mathx}{m}{n}
\DeclareMathAccent{\widebar}{0}{mathx}{"73}